\def\eqref#1{Eq.~(\ref{#1})}
\def\Eqref#1{Equation~\ref{#1}}
\def\1{\bm{1}}
\DeclareMathAlphabet{\mathsfit}{\encodingdefault}{\sfdefault}{m}{sl}
\SetMathAlphabet{\mathsfit}{bold}{\encodingdefault}{\sfdefault}{bx}{n}
\def\gL{{\mathcal{L}}}
\def\sR{{\mathbb{R}}}
\newcommand{\E}{\mathbb{E}}
\DeclareMathOperator*{\argmax}{arg\,max}
\def\ie{\textit{i.e.,~}}
\def\eg{\textit{e.g.,~}}
\theoremstyle{plain}
\newtheorem{theorem}{Theorem}[section]
\newtheorem{lemma}[theorem]{Lemma}
\theoremstyle{definition}
\theoremstyle{remark}
\lstdefinestyle{highlighted}{
    basicstyle=\ttfamily,
    columns=fullflexible,
    keepspaces=true,
    showstringspaces=false,
    tabsize=4,
    breaklines=true,
    commentstyle=\color{gray},
    keywordstyle=\color{blue},
    stringstyle=\color{red},
    frame=single,
    numbers=left,
    numberstyle=\tiny\color{gray},
    stepnumber=1,
    numbersep=10pt,
    language=Python,
    escapeinside={(*@}{@*)} %
}
\lstdefinestyle{plain}{
    basicstyle=\ttfamily\tiny,
    columns=fullflexible,
    keepspaces=true,
    showstringspaces=false,
    tabsize=4,
    breaklines=true,
    frame=single,
    stepnumber=1,
    numbersep=10pt,
    escapeinside={(*@}{@*)} %
}
\title{
Beyond Interpretability: The Gains of Feature Monosemanticity on Model Robustness
}
\begin{document}

\iclrfinalcopy

\author{%
  Qi Zhang\textsuperscript{1$*$} \qquad Yifei Wang\textsuperscript{2}\thanks{Equal Contribution.} \qquad
  Jingyi Cui\textsuperscript{1} \qquad  Xiang Pan\textsuperscript{3} \\ \textbf{Qi Lei\textsuperscript{3} \qquad Stefanie Jegelka\textsuperscript{4,5} \qquad
   Yisen Wang\textsuperscript{1}}\thanks{Corresponding Author: Yisen Wang (yisen.wang@pku.edu.cn).}\quad \\ 
\textsuperscript{1} Peking University\\
\textsuperscript{2} MIT CSAIL \\  
\textsuperscript{3} New York University\\
\textsuperscript{4} TUM CIT, MCML, MDSI\\
\textsuperscript{5} MIT EECS, CSAIL\\
}

\maketitle
\begin{abstract}
  
Deep learning models often suffer from a lack of interpretability due to \emph{polysemanticity}, where individual neurons are activated by multiple unrelated semantics, resulting in unclear attributions of model behavior. Recent advances in \emph{monosemanticity}, where neurons correspond to consistent and distinct semantics, have significantly improved interpretability but are commonly believed to compromise accuracy. In this work, we challenge the prevailing belief of the accuracy-interpretability tradeoff, showing that monosemantic features not only enhance interpretability but also bring concrete gains in model performance. Across multiple robust learning scenarios—including input and label noise, few-shot learning, and out-of-domain generalization—our results show that models leveraging monosemantic features significantly outperform those relying on polysemantic features. Furthermore, we provide empirical and theoretical understandings on the robustness gains of feature monosemanticity. Our preliminary analysis suggests that monosemanticity, by promoting better separation of feature representations, leads to more robust decision boundaries. This diverse evidence highlights the \textbf{generality} of monosemanticity in improving model robustness. As a first step in this new direction, we embark on exploring the learning benefits of monosemanticity beyond interpretability, supporting the long-standing hypothesis of linking interpretability and robustness. Code is available at \url{https://github.com/PKU-ML/Beyond_Interpretability}.
\end{abstract}

\section{Introduction}

A long-standing problem of deep learning is the so-called ``black-box'' nature. People find that an important factor for its lack of interpretability is feature \emph{polysemanticity}, where a single neuron (a dimension of feature maps) is activated by multiple \emph{irrelevant} semantics~\citep{arora2018linear,olah2020zoom}, preventing clear attributions of neural behaviors. Following this understanding, recent research has made breakthroughs towards attaining \emph{monosemanticity}, i.e., neurons corresponding to consistent semantics (monosemantic), which dramatically improves model interpretability; see a comparison in Figure~\ref{fig:visualization of polysemnatic}. They achieve this through architectural designs~\citep{elhage2022solu,wang2024non} or post-training explanation modules \citep{cunningham2023sparse}, and have successfully scaled to visual backbones (\eg ResNet) and large language models (LLMs, \eg Claude, GPT, and Gemma), discovering many intriguing phenomena and applications~\citep{templeton2024scaling,gao2024scaling,lieberum2024gemma,wang2024non}.

However, these works on monosemanticity suggest an inevitable ``accuracy-interpretability'' tradeoff: monosemantic features, although more interpretable, come at the sacrifice of expressive power and underperform polysemantic features at prediction accuracy. This widely accepted belief \citep{huben2023sparse,elhage2022toy} limits the applications of monosemanticity techniques to only interepretability-related domains. In this paper, we aim to push this boundary one step forward by demonstrating that monosemanticity can also bring significant gains on practical model performance beyond interpretability.

In particular, we discover a widely appearing phenomenon, that \textbf{monosemantic features are much more robust} compared to polysemantic features, across multiple scenarios related to ``robustness''. One such scenario is \textbf{learning with noise}.
Real-world data are often imperfect with low-quality input and mislabeling, manifested in the form of various data noises and distribution shifts. We find that under either input or label noises, learning a classifier upon (pretrained) monosemantic features can attain much higher accuracy (\eg +13.7\% top-1 accuracy under 90\% label noise) than polysemantic features, as shown in Figure~\ref{fig:visualization of performance}. This feature-centric result also offers a new perspective to noisy learning where existing studies primarily focus on robust learning objectives \citep{wang2019learning,song2020learning}.

The second secenario is \textbf{few-shot finetuning} for downstream classification. Today's large visual backbones often need to be finetuned on a small amount of downstream labeled data, where models easily overfit and deteriorate. We find that \emph{monosemantic finetuning}, \ie preserving the monosemanticity of representations during finetuning (with a technique from \cite{wang2024non}), can attain much higher accuracy under few-shot data compared to vanilla \emph{polysemantic finetuning} (\eg +3.9\% top-1 accuracy with 10\% samples). The same method also works for finetuning with noisy data or training from scratch.

With these benefits in mind, we further explore a third scenario, \textbf{LLM finetuning}, which receives wide applications these days \citep{minaee2024large}. Pretrained LLMs need to be carefully finetuned on small-scale language data for different purposes, \eg~instruction following and certain abilities (\eg reasoning), while avoiding conflicting and forgetting. Since LLMs do not have a natural representation space like visual models, we devise a simple sparse variant of LoRA, named \textbf{MonoLoRA}, to encourage the monosemanticity of the updates of all features. We show preliminary evidence that when finetuning an aligned LLM (Llama-2-7b-chat) on SST-2 (a classification task) and Dolly (instruction following task), MonoLoRA better preserves model alignment while improving task performance.

At last, we attempt to offer a deeper understanding of the robustness gains of monosemanticity. Empirically, we compare the salient features of different classifiers, observing that the more robust classifiers tend to depend on more monosemantic features. Theoretically, as a preliminary step, we compare polysemantic and monosemantic features under a toy model proposed in \citet{elhage2022toy}. The theory suggests that because monosemantic features have better separation of features, they are less prone to overfitting to noise, leading to more robust decision boundaries compared to polysemantic features.

In summary, this work challenges the common ``accuracy-interpretability'' tradeoff by demonstrating the potential of feature monosemanticity to bring clear gains in model accuracy. These gains manifest themselves in various aspects of ``learning robustness'' that we can think of: input noise, label noise, out-of-domain data, few-shot image data, and few-shot language data. The diverse set of evidence strongly indicates that feature monosemanticity provides \emph{a general sense of robustness} compared to polysemantic features, echoing with the long-lasting hypothesis on the relationship between better feature interpretability and better robustness (\eg human decisions are both interpretable and robust)  \citep{bengio2013representation,bengio2019meta}.
As a first step in this direction, we believe that it will embark on more intriguing discoveries and understandings on the learning benefits of monosemanticity beyond interpretability.

\begin{figure}
    \centering
    \subfigure[Illustration of Activated Samples on A Polysemantic (Left) and A Monomsemantic (Right) Dimension]{
    \includegraphics[width=.58\textwidth]{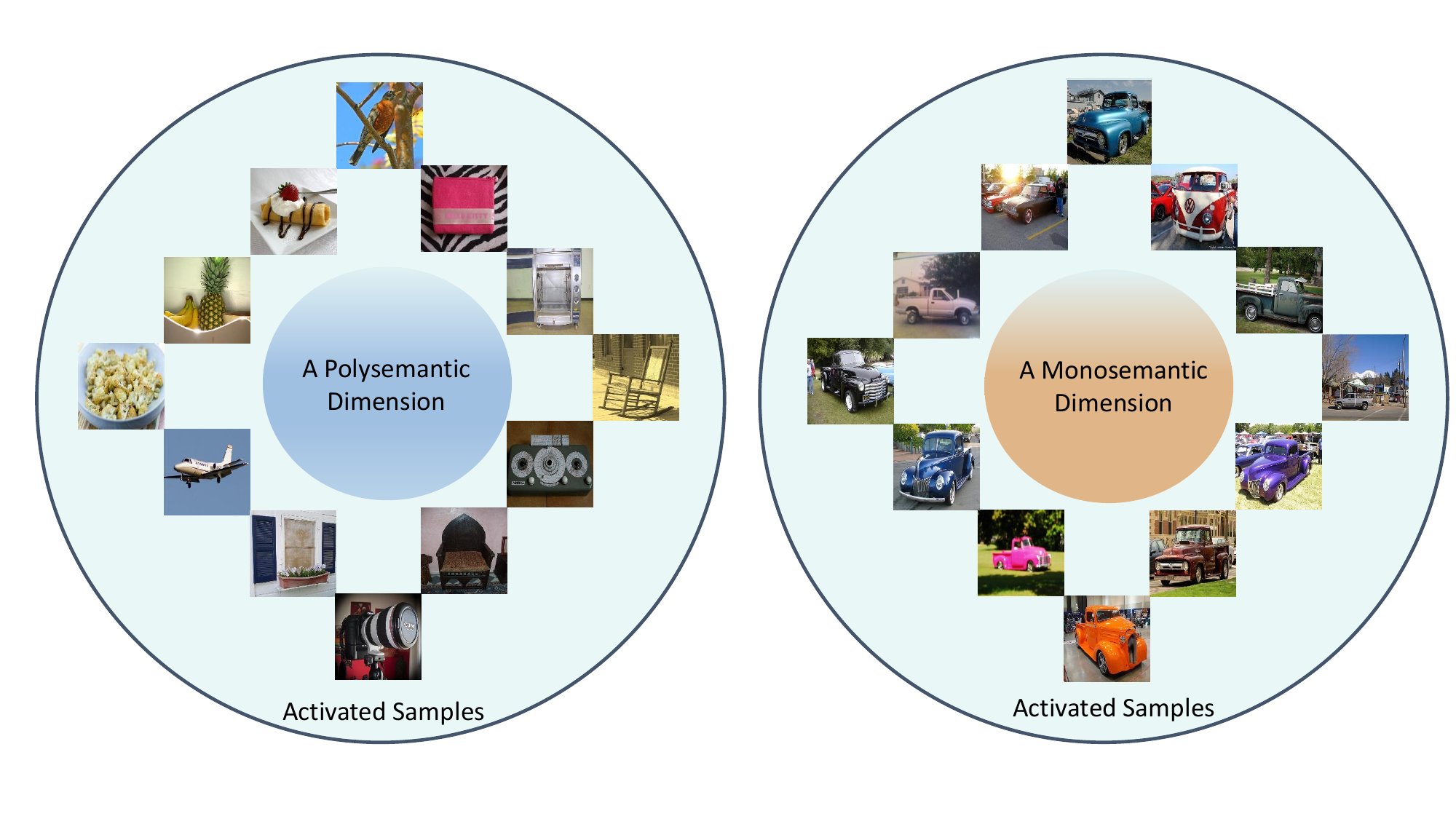}
    \label{fig:visualization of polysemnatic}
    }
    \hfill
    \subfigure[Test Accuracy (\%) of Classifiers Learned upon Polysemantic and Monosemantic Features on Different Scenarios]{
    \includegraphics[width=.38\textwidth]{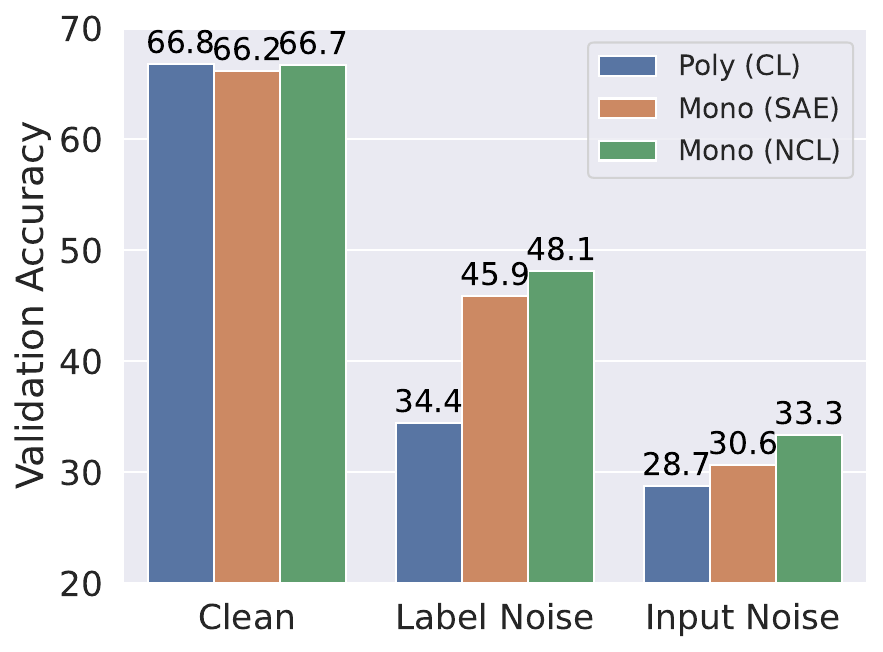}
    \label{fig:visualization of performance}
    }
    \caption{A comparison between polysemantic (CL) and monosemantic features (NCL, SAE) pretrained on ImageNet-100. We consider noisy labels (90 \% noise rate) and Gaussian input noise ($0.6$ stdev); 
    see more details in Appendix~\ref{subsec:details of Figure1}. }
    \label{fig: visualization of superposition}   
\end{figure}

\section{Preliminary \& Related Work}

\textbf{Polysemanticity and Superposition Hypothesis.} Across various domains, many previous studies \citep{nguyen2016multifaceted,mu2020compositional,olah2020zoom} have consistently observed that a feature dimension in the neural networks is usually activated with multiple unrelated semantics. Researchers define this phenomenon as the feature polysemanticity. In contrast, when each dimension is activated with a single latent natural concept, the features are denoted as monosemantic features. A popular explanation of the feature polysemanticity is the superposition hypothesis \citep{arora2018linear,olah2020zoom}, which states that each polysemantic dimension is an approximately linear combination of multiple natural concepts. To verify that, \citet{elhage2022toy} propose a toy model that obtains polysemantic features with the superposition hypothesis. Comparing polysemantic and monosemantic features, there exists a common belief that monosemantic features exhibit better interpretability at the cost of downstream performance \citep{cunningham2023sparse,elhage2022toy}. However, in this paper, we challenge this trade-off, finding that monosemantic features also show superiority when the performance is evaluated on robustness tasks.

\textbf{Methods to Attain Feature Monosemanticity.} To enhance the feature interpretability, researchers propose several methods to obtain monosemantic features. For example, Variational Autoencoder (VAE) \citep{kingma2013auto} and its variants \citep{higgins2017beta,chen2018isolating} have been used to find the disentangled features with monosemantcity. However, the performance of these methods in real-world tasks like image classification and natural language understanding is quite unsatisfactory. Recently, researchers have tried to attain monosemanticity with minimal influence on performance. The approaches can be majorly divided into two categories \citep{bereska2024mechanistic}: intrinsic and post-hoc methods. The intrinsic methods, represented by non-negative contrastive learning \citep{wang2024non}, focus on adjusting the pretraining algorithms. While the post-hoc methods apply downstream modifications on learned features. For example, the sparse autoencoder, which reconstructs the features from a sparse bottleneck layer, has recently shown impressive monosemanticity in various models \citep{ng2011sparse,gao2024scaling}. We note that previous works mainly focus on enhancing feature interpretability with monosemanticity. However, in this paper, we explore the relationship between monosemanticity and another crucial property of features: robustness.

\textbf{Robustness Learning.} In the development of deep learning models, robustness is a critical measure for evaluating the quality of features \citep{wang2021measure,xu2021robustness,muhammad2022survey}. The evaluation of robustness involves various task scenarios. Common conditions include assessing the robustness of features against noisy labels \citep{song2022learning}, distribution shifts \citep{yang2024generalized}, overfitting \citep{ying2019overview}, etc. In this paper, we analyze the robustness from a new perspective, i.e., we evaluate the influence of monosemanticity in different robustness tasks. For learning with noisy labels, we apply the symmetric label noise to the training samples, i.e., with a probability $\eta$ (noise rate), the labels of samples are uniformly flipped to the other classes. For robustness against distribution shifts, we apply various shifts, such as Gaussian noise, uniform noise, and real-world distribution shifts \citep{wang2019learning,geirhos2018imagenet} to the validation samples. For robustness against overfitting, we finetune the vision and language models with fewer samples and evaluate the validation performance.

\section{The Robustness Gains of Monosemanticity}\label{sec:robustness-gain}
In this section, we compare polysemantic with monosemantic features across three different robust learning scenarios commonly encoutered \textbf{in the foundation model regime}: \textbf{first}, noisy linear probing on pretrained features (either polysemantic or monosemantic); \textbf{second}, noisy and few-shot finetuning from pretrained weights; \textbf{third}, finetuning LLMs on small-scale supervised data.

\subsection{Monosemantic Features are Robust under Linear Probing}
\label{sec:monosemantic representation}

Foundation models typically have two training phases: 1) s\textit{elf-supervised learning} (SSL) on massive unlabeled data, and 2) \textit{supervised finetuning} on small human-labeled data (classification, instruction following, or specific tasks). In fact, since SSL-pretrained features contain rich semantics, learning a simpler linear classifier on top, known as \textbf{linear probing} (LP), can often attain competitive performance to fully supervised ones \citep{simclr}. Therefore, we start with this simplest setting for comparing the robustness of polysemantic and monosemantic \emph{pretrained} features.
Specifically, we consider a standard linear probing setting,  where we first pretrain features on unlabeled data and then learn a linear classifier on top with noisy labeled data.

\subsubsection{Methods for Feature Monosemanticity}

Among existing interpretability research, there are two categories of methods to attain monosemanticity: 1) intrinsic methods, where pretrained features are intrinsically monosemantic; 2) post-hoc methods, where we apply additional techniques to decode (polysemantic) pretrained features to monomsemantic ones. Here, we consider two representative methods for each paradigm.

\textbf{Intrinsic Monosemanticity with NCL.} 
Many previous works have tried to train interpretable features by adding sparsity regularization \citep{tibshirani1996regression} or identifiability constraints \citep{zhang2024identifiable}; but they hardly scale to large-scale data with competitve performance. A recent work, NCL (non-negative contrastive learning) \citep{wang2024non}, as a modern counterpart to NMF (non-negative matrix factorization)  \citep{lee1999learning}, attains high sparsity and monosemanticity while having minimal influence on final performance.
 Specifically, NCL adopts the following InfoNCE loss \citep{infoNCE} with non-negative feature outputs:
\begin{equation}
     \gL_{\rm NCL}(f)=-\E_{x,x^+}\log\frac{\exp(f_+(x)^\top f_+(x^+))}{\exp(f_+(x)^\top f_+(x_+))+\frac{1}{M}\sum_{i=1}^M\exp(f_+(x)^\top f_+(x_i^-)},
\end{equation}
where $(x,x^+)$, $(x,x^-)$ are the positive and negative pairs in contrastive learning, $f_+(x) = \sigma(f(x))$, $\sigma$ is an activation function and $f$ is the original neural network. With the non-negative constraints, the activations of learned representations become sparse and each dimension is almost only activated with samples from the same class \citep{wang2024non}.

\textbf{Post-hoc Monosemanticity with SAE.}
Another approach is to apply downstream modification on pretrained neural networks.
Sparse autoencoders (SAEs) \citep{ng2011sparse} find wide success in attaining monosemanticity in language models~\citep{templeton2024scaling,gao2024scaling,lieberum2024gemma}. SAEs reconstruct the original outputs of pretrained networks from a sparse bottleneck layer. To be specific, the encoder and decoder are defined as:
\begin{equation}
\begin{aligned}
    z(x) &= \operatorname{topK}((W_{\rm enc}(f(x)-b_{\rm pre})+b_{\rm enc}),\\
    \hat{f}(x) &= W_{\rm dec}z(x) + b_{\rm pre}.
\end{aligned}
\label{eqn:SAE}
\end{equation}
where $f(x)$ is the representation of input $x$; $W_{\rm enc}$, $W_{\rm dec}$, $b_{\rm pre}$ and $b_{\rm enc}$ are the parameters of SAE; $\operatorname{topK}$ is a sparse activation function proposed by \citet{gao2024scaling} that only preserves the top $K$ elements; and the SAE training loss is the reconstruction MSE $\gL_{\rm SAE}=\E_x\|\hat{f(x)} - f(x) \Vert^2$. As a result, the sparse latent feature $z(x)$ has much better monosemanticity than the original feature $f(x)$.

\begin{table}[]
\caption{Linear probing accuracy and gain (\%) of polysemantic and monosemantic representations on ImageNet-100 and CIFAR-100 under different  rates (\%) of label noise ($0$ (clean label) to $90$).
}
\centering
\resizebox{1.0\linewidth}{!}{
\begin{tabular}{llcccccccccc}
\hline
Dataset (\%)                   & Features                             & 0                           & 10                                   & 20                                   & 30                                   & 40                                   & 50                                   & 60                                   & 70                                   & 80                                   & 90                                    \\ \hline
                               & \multirow{1}{*}{\color[HTML]{000000} Poly (CL)}    & \textbf{54.5}               & 53.4                                 & 52.8                                 & 52.1                                 & 51.2                                 & 49.9                                 & 49.5                                 & 48.0                                 & 45.0                                 & 35.7                                  \\
                               & \multirow{2}{*}{\color[HTML]{000000} Mono (SAE)}   & 54.4                        & \textbf{53.9}                        & 53.4                                 & \textbf{52.9}                        & 51.9                                 & 51.3                                 & 50.5                                 & \textbf{49.7}                        & 47.1                                 & 39.1                                  \\
                               & {\color[HTML]{000000} } & {\color[HTML]{808080} -0.1} & \textbf{\color[HTML]{036400} +0.5}          & {\color[HTML]{036400} +0.6}          & {\color[HTML]{036400} \textbf{+0.8}} & {\color[HTML]{036400} +0.7}          & {\color[HTML]{036400} +1.4}          & {\color[HTML]{036400} +1.0}          & {\color[HTML]{036400} \textbf{+1.7}} & {\color[HTML]{036400} +2.1}          & {\color[HTML]{036400} +3.4}           \\
                               & \multirow{2}{*}{\color[HTML]{000000} Mono (NCL)}   & 52.8                        & \textbf{53.9}                        & \textbf{53.5}                        & 52.6                                 & \textbf{52.6}                        & \textbf{52.3}                        & \textbf{51.4}                        & 49.5                                 & \textbf{48.0}                        & \textbf{44.9}                         \\
\multirow{-5}{*}{CIFAR-100}    & {\color[HTML]{000000} } & {\color[HTML]{808080} -1.7} & {\color[HTML]{036400} \textbf{+0.5}} & {\color[HTML]{036400} \textbf{+0.7}} & {\color[HTML]{036400} +0.5}          & {\color[HTML]{036400} \textbf{+1.4}} & {\color[HTML]{036400} \textbf{+2.4}} & {\color[HTML]{036400} \textbf{+1.9}} & {\color[HTML]{036400} +1.5}          & {\color[HTML]{036400} \textbf{+3.0}} & {\color[HTML]{036400} \textbf{+9.2}}  \\ \hline
                               & {\color[HTML]{000000} Poly (CL)}    & \textbf{66.8}               & 63.1                                 & 61.8                                 & 60.1                                 & 58.8                                 & 56.4                                 & 54.9                                 & 53.1                                 & 48.9                                 & 34.4                                  \\
                               & \multirow{2}{*}{\color[HTML]{000000} Mono (SAE)}   & 66.2                        & 65.3                                 & 62.3                                 & 60.5                                 & 59.8                                 & 59.7                                 & 58.5                                 & 55.9                                 & 54.3                                 & 45.9                                  \\
                               & {\color[HTML]{000000} } & {\color[HTML]{808080} -0.4} & {\color[HTML]{036400} +2.2}          & {\color[HTML]{036400} +0.5}          & {\color[HTML]{036400} +0.4}          & {\color[HTML]{036400} +1.0}          & {\color[HTML]{036400} +3.3}          & {\color[HTML]{036400} +3.6}          & {\color[HTML]{036400} +2.8}          & {\color[HTML]{036400} +5.4}          & {\color[HTML]{036400} +11.5}          \\
                               & \multirow{2}{*}{\color[HTML]{000000} Mono (NCL)}   & 66.7                        & \textbf{65.4}                        & \textbf{64.4}                        & \textbf{63.9}                        & \textbf{62.4}                        & \textbf{62.3}                        & \textbf{60.6}                        & \textbf{59.8}                        & \textbf{57.6}                        & \textbf{48.1}                         \\
\multirow{-5}{*}{ImageNet-100} & {\color[HTML]{000000} } & {\color[HTML]{808080} -0.1} & {\color[HTML]{036400} \textbf{+2.3}} & {\color[HTML]{036400} \textbf{+2.6}} & {\color[HTML]{036400} \textbf{+3.8}} & {\color[HTML]{036400} \textbf{+3.6}} & {\color[HTML]{036400} \textbf{+5.9}} & {\color[HTML]{036400} \textbf{+5.7}} & {\color[HTML]{036400} \textbf{+6.7}} & {\color[HTML]{036400} \textbf{+8.7}} & {\color[HTML]{036400} \textbf{+13.7}} \\ \hline
\end{tabular}
}
\label{tab:contrastive results against label noise}
\end{table}

\subsubsection{Experiments}

\textbf{Setup.} For the baseline, we pretrain a ResNet-18 \citep{resnet} backbone with the widely-used contrastive framework SimCLR \citep{simclr} on CIFAR-100 and ImageNet-100.  In comparison, we use Non-negative Contrastive Learning \citep{wang2024non} and Sparse Autoencoder \citep{gao2024scaling} to represent two primary strategies for obtaining monosemantic features, i.e., improve the pretraining algorithm and apply downstream modification. For Non-negative Contrastive Learning (NCL), we follow the default SimCLR settings, with the addition of a non-negative constraint using the ReLU function. For the Sparse Autoencoder (SAE), we apply it following the pretrained backbone as Equation (\ref{eqn:SAE}), and then we train the linear classifier on the frozen latent representation of SAE. More details can be found in Appendix \ref{sec:details of linear probing}.

\textbf{Robustness Against Label Noise.}
When evaluating the robustness against label noise,  we train a linear classifier following the frozen pretrained encoders, where the labels are uniformly flipped to the other classes with a probability $\eta$ (noise rate). As shown in Table~\ref{tab:contrastive results against label noise}, when the linear classifiers are trained on the samples with clean labels, monosemantic and polysemantic features exhibit comparable performance. However, in the presence of label noise, both NCL and SAE significantly outperform across various datasets. Especially, when the noise rates are aggressive, the improvements are substantial, with NCL showing a 13.7\% improvement on ImageNet-100 and a 9.2\% improvement on CIFAR-10 under 90\% noisy labels. The results are consistent with the results in toy models and further verify that monosemnatic features obtain stronger robustness against label noise.

\textbf{Robustness Against Distribution Shifts.}
For evaluating the resilience of features to distribution shifts, we evaluate three types of shifts, including random input noise, random Gaussian noise, and real-world distribution shifts \citep{wang2019learning,geirhos2018imagenet} on ImageNet-100 datasets. The models and classifiers are trained on the clean ImageNet-100 dataset while their classification performance is evaluated with noisy samples.  As shown in Figure~\ref{fig: gaussian noise}, \ref{fig: uniform noise}, and \ref{fig: ood accuracy}, both the pretraining constraints and downstream modifications that enhance feature monosemanticity improve classification accuracy under noisy samples, and the benefits rise with the increase of noise strength. The results suggest that the monosemantic features can also enhance the robustness against various noises applied in inputs.

\begin{figure}
    \centering
    \subfigure[Gaussian Input Noise]{
    \includegraphics[width=.3\textwidth]{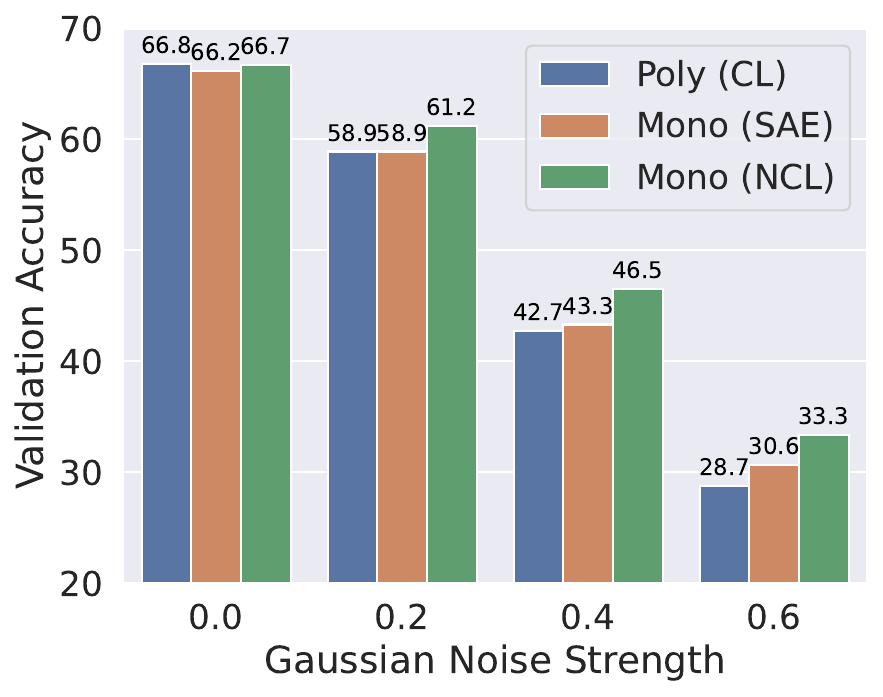}
    \label{fig: gaussian noise}
    }
    \subfigure[Uniform Input Noise]{
    \includegraphics[width=.3\textwidth]{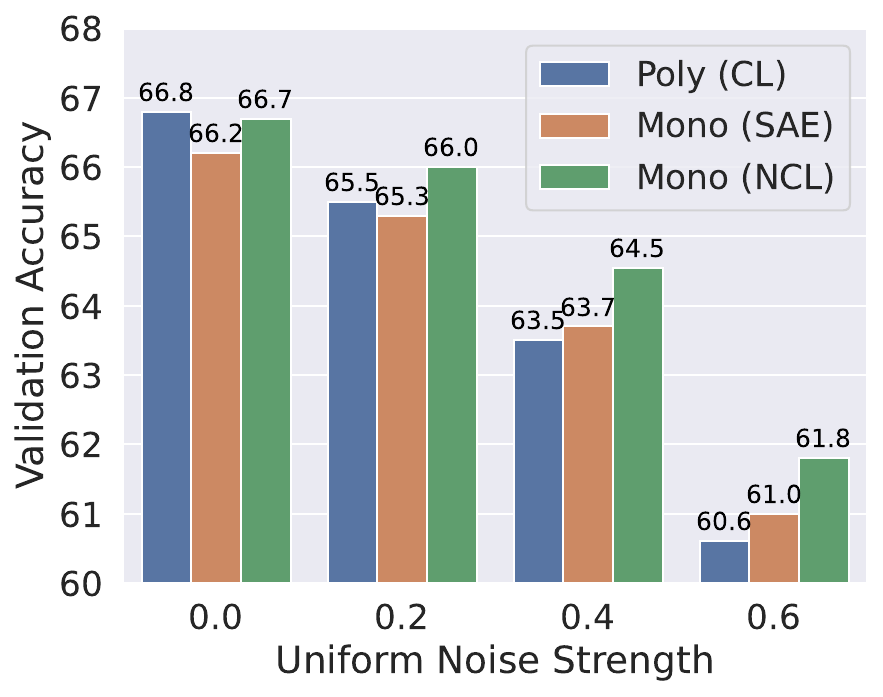}
    \label{fig: uniform noise}
    }
    \subfigure[Real-world Distribution Shift]{
    \includegraphics[width=.3\textwidth]{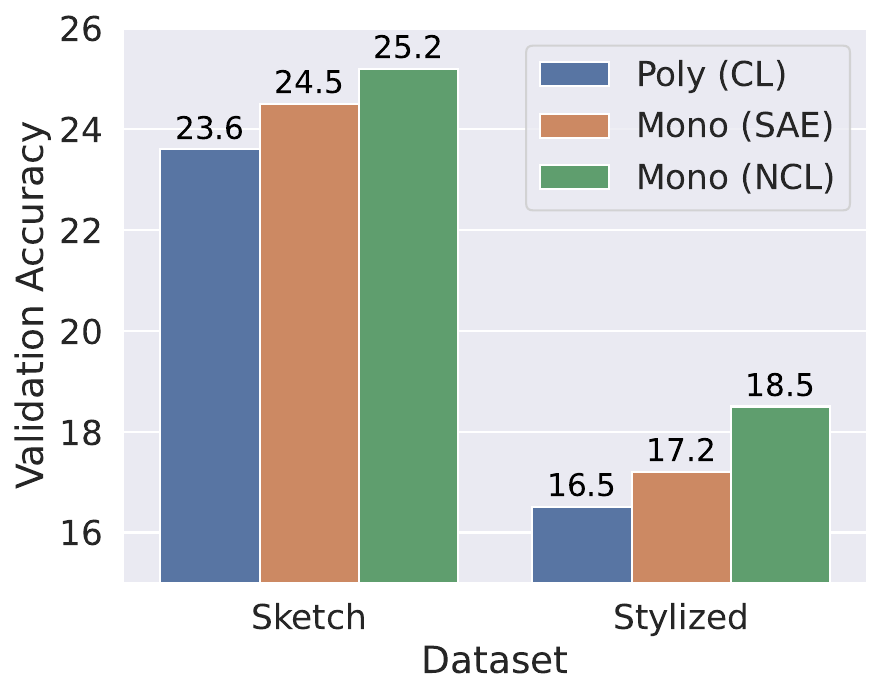}
    \label{fig: ood accuracy}
    }

    \caption{The evaluation of robustness against input distribution shifts on ImageNet-100. Monosemantic representations (SAE,NCL) exhibit improvements in the robustness against different kinds of distribution shifts.}
\end{figure}

\subsection{
Monosemantic Features are Robust under Few-shot and Noisy Finetuning
}
\label{sec:monosemantic finetuning vision}

In practice, fully finetuning a large pretrained model on downstream labeled data can often achieve better performance than linear probing, but it also easily overfits if there are only a few amount of labeled data. Here, we compare standard finetuning (polysemantic) to monosemantic finetuning.

\subsubsection{Methods for Monosemantic Finetuning}
\label{sec:mono-finetune}
\textbf{Standard Finetuning.} For the baseline, we consider a common finetuning setting, i.e., we pretrain the encoders with contrastive learning on unlabeled ImageNet-100 and then learn a linear classifier on labeled Imagenet-100 with the cross-entropy loss: $\gL_{\rm CE}(f)=\E_{x,y}\log\frac{\exp(f(x)^\top w_y)}{\sum_{c=1}^C\exp(f(x)^\top w_c)}$, where $f$ is the encoder network and $w_c$ is the linear classifier weight of the related label.
Unlike linear probing, we train classifiers on the pretrained representations without clipping the gradient of encoders.

\textbf{Non-negative Tuning.}
According to NCL \citep{wang2024non}, replacing the original cross-entropy (CE) loss used in the supervised learning process with the non-negative cross-entropy (NCE) loss will \emph{maintain monosemanticity during supervised learning}. Thus, we use it as a monosemantic finetuning strategy. To be specific, NCE applies non-negative transformation to the representations $f(x)$, i.e.,
\begin{equation}
    \gL_{\rm NCE}(f)=\E_{x,y}\log\frac{\exp(f_+(x)^\top w_y)}{\sum_{c=1}^C\exp(f_+(x)^\top w_c)},
\end{equation}
where $f_+(x) = \sigma(f(x))$ with a non-negative activation function, e.g., ReLU. By respectively finetuning contrastive pretrained models with CE and NCE objectives, we compare the robustness of polysemantic and monosemantic features across two different tasks: few-shot finetuning and noisy label finetuning.

\subsubsection{Experiments}
\label{sec:finetune-exp}

\textbf{Few-shot Finetuning.}
As the finetuning process usually involves fewer training samples, a crucial challenge for feature robustness is preventing overfitting on small training datasets. To evaluate the performance of polysemantic and monosemantic features during few-shot finetuning, we respectively use 10\%, 20\%, 50\% and the entire training set of ImageNet-100 to finetune the pretrained representations with CE and NCE objectives. As shown in Figure~\ref{fig: few-shot finetuning train},~\ref{fig: few-shot finetuning}, the monosemantic features exhibit \textbf{lower training accuracy but higher validation accuracy} in few-shot finetuning, and the advantages grow when the training set becomes smaller, which implies that the monosemanticity helps representations to be less likely to overfit the training set in the downstream task.

\begin{figure}
    \centering
        \subfigure[Training Accuracy of Few-shot Finetuning]{
    \includegraphics[width=.3\textwidth]{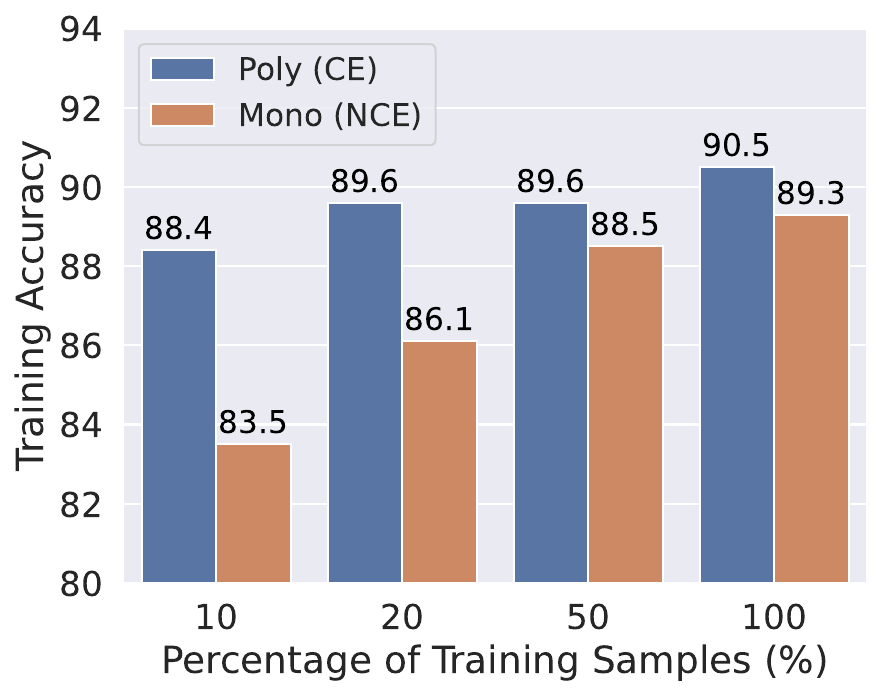}
    \label{fig: few-shot finetuning train}
    }
        \subfigure[Validation Accuracy of Few-shot Finetuning]{
    \includegraphics[width=.3\textwidth]{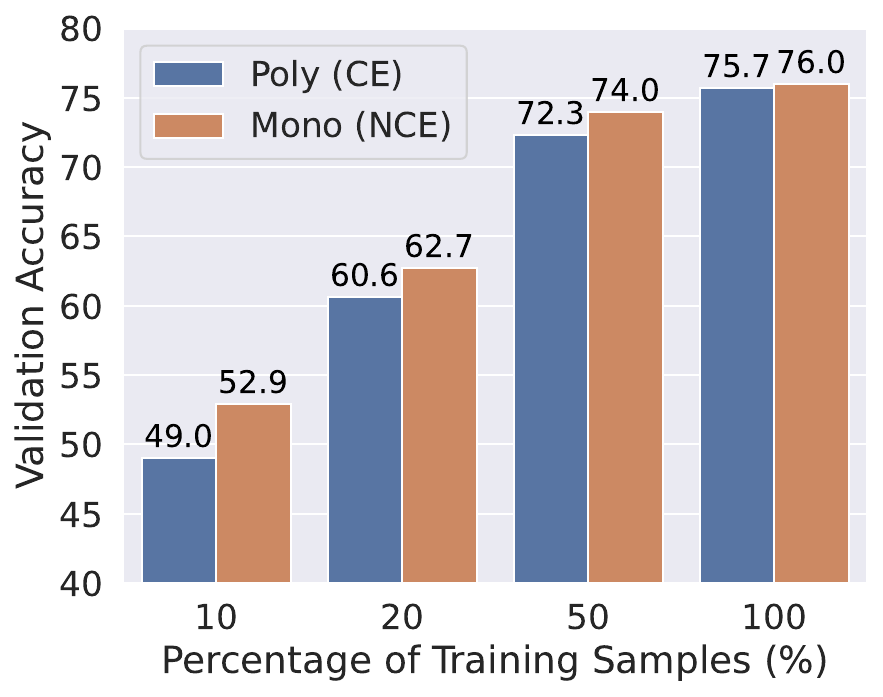}
    \label{fig: few-shot finetuning}
    }
    \subfigure[Noisy Label Finetuning]{
    \includegraphics[width=.3\textwidth]{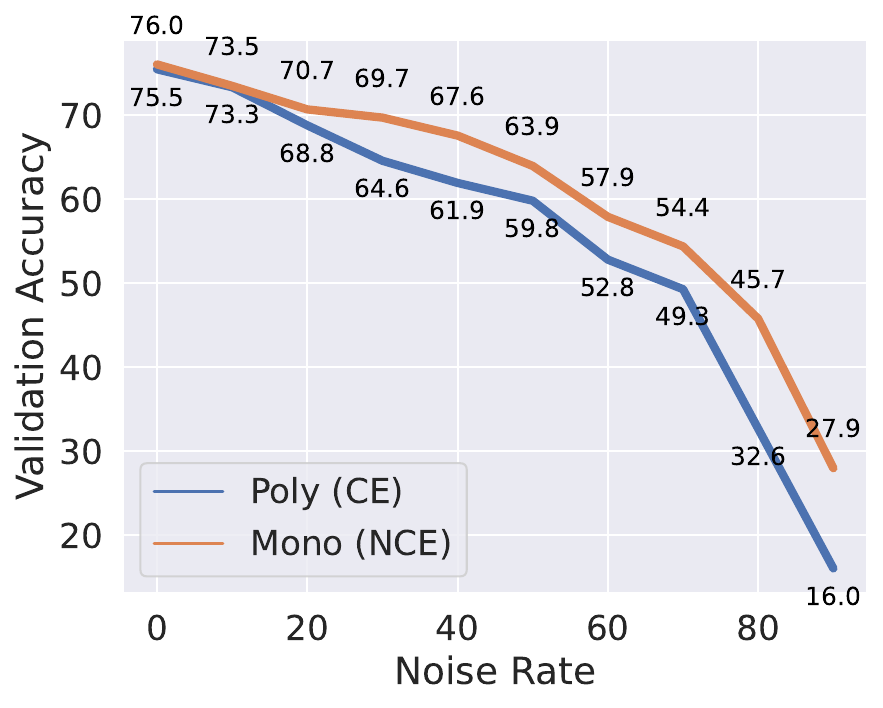}
    \label{fig: Finetuning Under Label Noise}
    }
    \
\caption{The robustness of the models finetuned with polysemanticity (CE) and monosemanticity (NCE) under different noises on ImageNet-100. Attaining monosemanticity during the finetuning process enhances the robustness across various tasks.}
\end{figure}

\textbf{Noisy-label Finetuning.}
We also evaluate robustness against label noise in finetuning tasks on ImageNet-100. During the finetuning process, the labels of training samples are uniformly flipped to the other classes with a probability $\eta$ (noise rate).  As shown in Figure~\ref{fig: Finetuning Under Label Noise}, non-negative finetuning leads to significant gains under label noise that keep growing with the increase of the noise rate. Notably, monosemantic features exhibit \textbf{at most 11.9\% improvement} under large noise rate. 

These empirical results indicate that maintaining feature monosemanticity during the finetuning process can bring better learning robustness against overfitting and label noise.

\subsection{Monosemantic LoRA for Large Language Models}\label{sec:llm-finetune}

In Section~\ref{sec:finetune-exp}, we show that maintaining feature monosemanticity during supervised finetuning can be much more resistant to overfitting. This favorable property suggests that monosemanticity can also benefit LLM finetuning of widely applications today. Existing LLM training has two stages: 1) pretraining on large-scale unlabeled data, and 2)  supervised finetuning (or post-training) on small-scale data. Since LLMs are very large and labeled data are small, overfitting becomes a severe issue in LLM finetuning \citep{vm2024fine,zhang2024scaling}.
Given that LLMs, unlike supervised classifiers, do not have a natural representation space and they are more prone to overfit due to the large model size, we extend LoRA, a standard efficient finetuning method, to have a more \emph{monosemantic} update per layer by prompting sparsity in its update.

\subsubsection{Methods for Monosemantic LLM Finetuning}

\textbf{Low-rank Adaptation (LoRA).}
LoRA (low-rank adaptation) is a de facto method for finetuning LLM weights a lower cost by factorizing it into low-rank weights. Specifically, for each LLM weight $W_0\in\sR^{d\times k}$, we can reparameterize the fine-tuned weight as $\Delta W=AB$,  where $A\in\sR^{d\times r}, B\in\sR^{r\times k}$ are two low-rank matrices (with $r\ll\min(d,k)$) actually being learned during finetuning. After finetuning, the updated output of the linear layer with weight $W$ becomes
\begin{equation}
    \begin{aligned}
      y=W_{\rm LoRA}x=(W_0+\Delta W)x=W_0x+\Delta Wx=W_0x+ABx.
    \end{aligned}
    \label{eq:lora}
\end{equation}
The LoRA weights can be used separately or merged back to model weights.

\textbf{Monosemantic LoRA.}
Inspired by non-negative finetuning (Section~\ref{sec:mono-finetune}), we add non-negative constraints inside LoRA modules to better promote feature monosemanticity:
\begin{equation}
    \begin{aligned}
          y=W_{\rm MonoLoRA}x=W_0x+\Delta W(x)=W_0x+ \sigma(A\sigma(B\sigma(x))),
    \end{aligned}
    \label{eq:monolora}
\end{equation}
where $\sigma$ is the non-negative transformation ($\operatorname{ReLU}$ by default). Compared to \Eqref{eq:lora}, the MonoLoRA update encourages the low-rank weights to yield sparse updates that help prevent overfitting.

\subsubsection{Experiments}

When evaluating, we consider a common scenario related to robustness in large language model fine-tuning. Specifically, during the fine-tuning process, the large language models often compromise the already learned alignment, which leads to a security risk \citep{qi2023fine}. In practice, we use the Llama-2-7B-Chat~\citep{touvron2023llama} as the aligned model and further finetune it with SST2 ~\citep{socher-etal-2013-recursive} and Dolly~\citep{conover2023free} datasets as downstream tasks. To evaluate the security and alignment performance, we use the ShieldGemma-9B~\citep{zeng2024shieldgemma} and Beavertails-7B~\citep{ji2024beavertails} models to evaluate the alignment of model responses based on the response on Beavertails datasets~\citep{ji2024beavertails}. More details can be found in Appendix~\ref{sec:app_nlora}.

\begin{table}[H]
    \centering
    \caption{Evaluation of LoRA and MonoLoRA with Llama-2-7B-Chat on SST2 and Dolly. SST2 is evaluated by accuracy and Dolly is evaluated by RougeL. Alignment and Beavertails scores are the lower the better.}\label{tab:model-comparison}
    \adjustbox{max width=\textwidth}{
        \begin{tabular}{l l ccccccccc}
            \toprule
            Dataset & Model             & \multicolumn{5}{c}{ShieldGemma Alignment Scores ($\downarrow$)} & \multirow{2}{*}{\makecell{Alignment \\ Sparsity}} & \multirow{2}{*}{\makecell{Task \\ Sparsity}} & \multirow{2}{*}{\makecell{Beavertails  \\ ($\downarrow$) }} & \multirow{2}{*}{\makecell{Task Perf. \\ ($\uparrow$)}} \\
            \cmidrule(lr){3-7}
                    &                   & \makecell[c]{Danger.}                                           & \makecell[c]{Harass.}               & \makecell[c]{Hate.} & \makecell[c]{Sex.} & \makecell[c]{Avg} &   &   &                &                \\
            \midrule
            \multirow{3}{*}{SST2}
                    & Base              & 7.66                                                            & 2.88                                & 6.14                & 2.64               & 4.83              & - & - & 20.90          & 88.65          \\
                    & LoRA              & 8.48                                                            & 6.91                                & 9.43                & 6.77               & 7.90              & 0 & 0 & 20.60          & 92.78          \\
                    & \textbf{MonoLoRA} & \textbf{5.37}                                                   & \textbf{2.23}                       & \textbf{4.63}       & \textbf{1.88}      & \textbf{3.53}     & 45.54 & 36.71 & \textbf{20.00} & \textbf{94.84} \\
            \midrule
            \multirow{3}{*}{Dolly}
                    & Base              & 7.66                                                            & 2.88                                & 6.14                & 2.64               & 4.83              & - & - & 20.90          & 10.21          \\
                    & LoRA              & 10.54                                                           & \textbf{3.53}                       & 7.53                & 2.86               & 6.12              & 0 & 0 & 23.80          & 14.08          \\
                    & \textbf{MonoLoRA} & \textbf{10.49}                                                  & 3.56                                & \textbf{7.40}       & \textbf{2.70}      & \textbf{6.04}     & 38.69 & 40.00 & \textbf{22.60} & \textbf{14.48} \\
            \bottomrule
        \end{tabular}
    }
\end{table}
As shown in Table~\ref{tab:model-comparison}, the alignment of the monosemantic LoRA models is more resilient to overfitting than that of normal LoRAs and in the meantime, they can achieve comparable fine-tuning task performance. We evaluate the sparsity (zero value ratio) of the intermediate activations of the LoRA and MonoLoRA models, which is the intrinsic sparsity of the LoRA module. The results suggest that the monosemanticity at neuron levels can also improve the robustness of LLMs against overfitting when finetuned with small-scale data.

\section{
Understanding the Robustness Gains of Monosemanticity
}

In Section~\ref{sec:robustness-gain}, we provide a comprehensive evaluation of the robustness gains of feature monosemanticity across multiple scenarios. 
Yet, we do not have a fully clear understanding of \emph{why monosemantic features are more robust}.
As a preliminary step to demystify this phenomenon, in this section, we investigate the influence of monosemanticity on learned classifiers from both empirical (Section~\ref{sec:empirical}) and theoretical (Sections~\ref{sec::toymodel} \& \ref{sec:theory}) perspectives. For simplicity, we focus on the {label noise} scenario.

\subsection{
Noisy Classifiers {Prefer Monosemantic Features} in Practice
}
\label{sec:empirical}

To further understand the robustness improvements brought by monosemanticity, we investigate the difference in the salient features of the robust and non-robust classifiers under noisy conditions. Taking the linear classifier trained on ImageNet-100 with 90\% noisy labels (Section~\ref{sec:monosemantic representation})) as an example, we start with respectively visualizing the dominant features for classes with the highest and lowest accuracy. For each class, we find the feature dimension with the largest classifier weight for the ground-truth label and visualize the top-activated samples along the dimension. As shown in Figure \ref{fig: visualization of wrong examples}, \ref{fig: visualization of correct examples}, we observe a clear difference: samples activated in the dimension related to the lowest accuracy class (jeans) belong to different classes while samples activated in the dimension related to the highest accuracy class (boathouse) share the same label, i.e., the classifier with higher performance under label noise relies on a more monosemantic dimension.

\begin{figure}
    \centering
     \subfigure[The most salient feature of the lowest-accuracy class is polysemantic.]{
    \includegraphics[width=.29\textwidth]{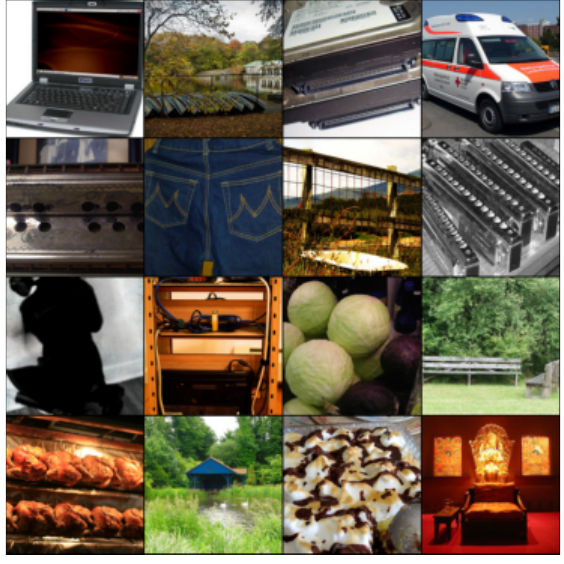}
    \label{fig: visualization of wrong examples}
    }
    \hfill
    \subfigure[
    The most salient feature of the highest-accuracy class is monosemantic.
    ]{
    \includegraphics[width=.29\textwidth]{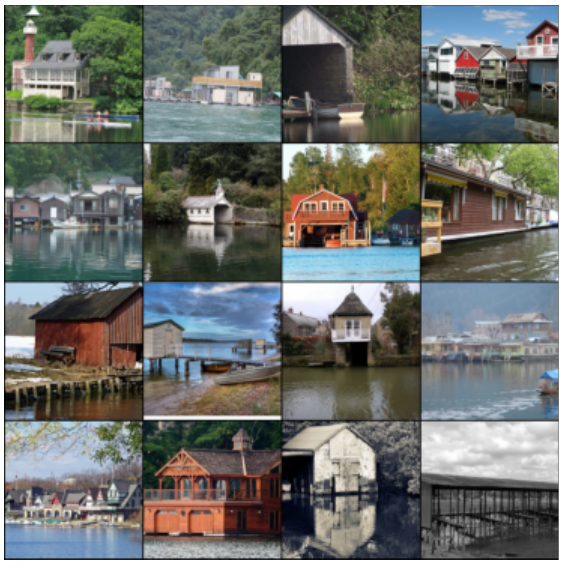}
    \label{fig: visualization of correct examples}
    }
    \hfill
    \subfigure[
    Correctly classified samples have more monosemantic features.
    ]{
    \includegraphics[width=.32\textwidth]{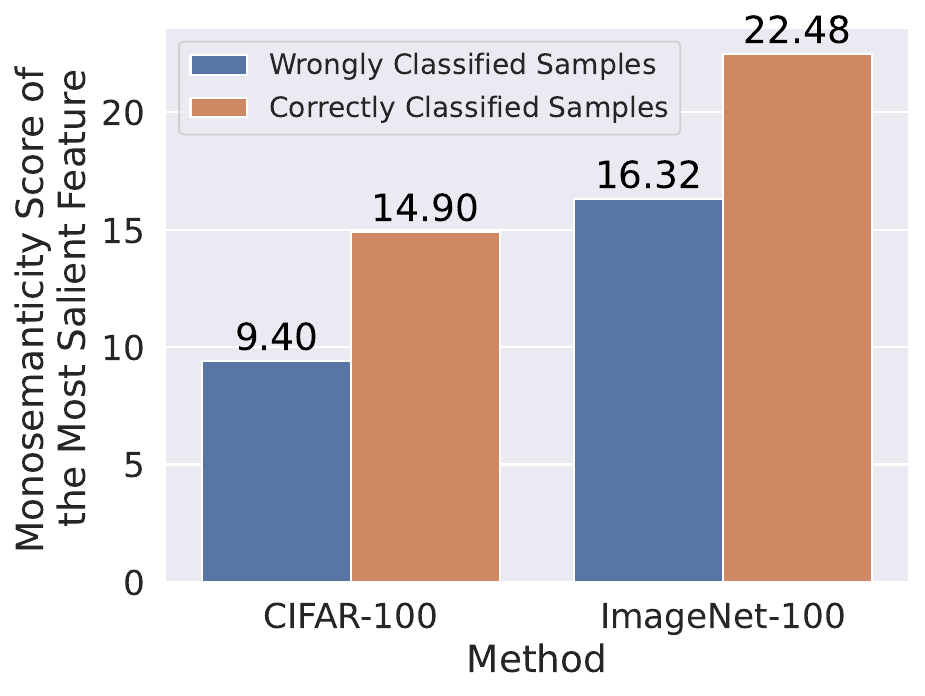}
    \label{fig: decided features}
    }
    \caption{
    Influence of feature monosemanticity on classification performance, where the classifier is applied after a frozen contrastive encoder and trained with 90\% noisy labels. (a), (b) respectively draw the activated samples on the dimensions with the largest clssifier weight of the lowest-accuracy and highest-accuracy classes on ImageNet-100. (c) demonstrates the monosemanticity scores \citep{wang2024non} of wrongly and correctly classified samples.
    }
    \label{fig: empirical verification}
\end{figure}

We then validate this observation with the semantic consistency \citep{wang2024non} as the quantitative monosemanticity score. 
The semantic consistency calculates the proportion of activated samples that belong to their most frequent class along a dimension. With a larger semantic consistency, the dimension is more likely to be activated by the samples from the same class, i.e., the feature is more monosemantic. To compare the robust and non-robust classifiers, we respectively draw the samples that are wrongly and correctly classified by the classifiers learned on ImageNet-100 with 90\% noisy labels. For the embedding of each sample, we draw the dimension with the largest activation value and calculate the semantic consistency.

As shown in Figure~\ref{fig: decided features}, we observe that the semantic consistency of the most salient features in correctly classified samples is much higher than that of misclassified samples. \textbf{The results further indicate that the classifiers with superior performance under noise tend to depend on monosemantic features.}

\subsection{Replicating Monosemanticity Gains with the Superposition Model}\label{sec::toymodel}

To further establish a theoretical understanding of the benefits brought by monosemanticity, we introduce a toy model proposed by \citep{elhage2022toy} for the simplicity of analysis. The toy model constructs polysemantic representations with the superposition hypothesis \citep{arora2018linear}, a widely-used explanation of feature polysemanticity. The hypothesis states that a polysemantic feature is an approximately linear combination of multiple latent semantics while a monosemantic feature is the reconstruction of a single natural concept. With the hypothesis, the toy model enables researchers to replicate the polysemanticity phenomenon and theoretically analyze the properties of polysemantic and monosemantic features, \eg occurrence conditions, learning dynamics, and geometric structures \citep{lecomte2024causes,marshall2024understanding,chen2023dynamical}. In this section, we start by introducing the setups and observing the robustness of different features on the toy model.

\begin{figure}
    \centering

    \subfigure[
    Polysemantic dimensions correspond to multiple latent semantics.]{
    \includegraphics[width=.60\textwidth]{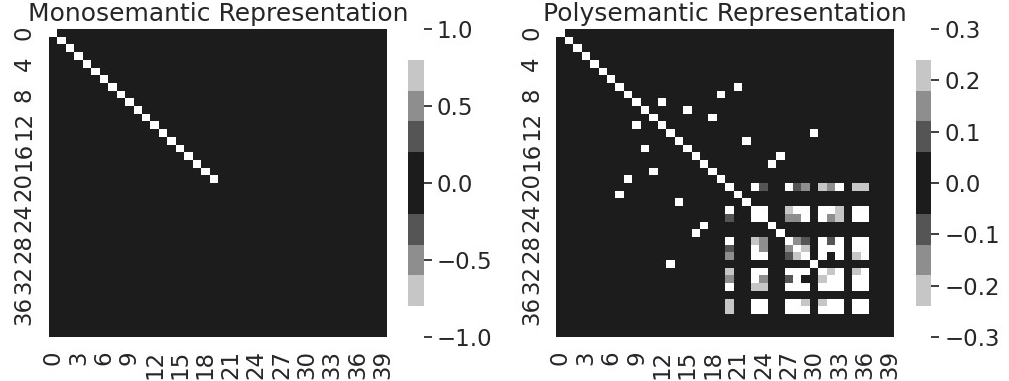}
    \label{fig: visualization of WW}
    }
    \hfill
    \subfigure[Polysemantic features have worse performance under noisy data.]{
    \includegraphics[width=.33\textwidth]{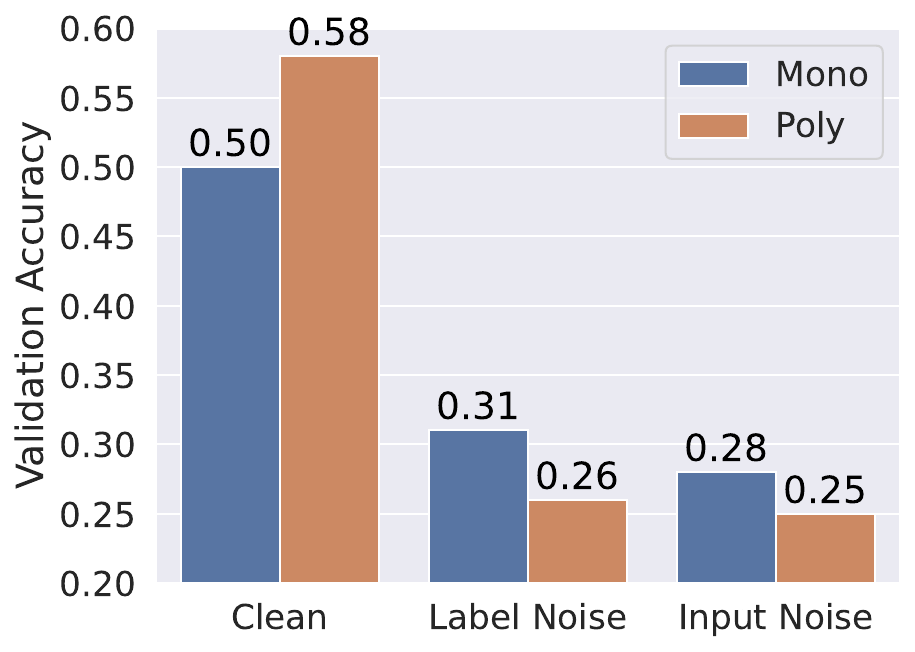}
    \label{fig:toy model clean}
    }
    \caption{The comparison between polysemantic and monosemantic features on the toy model introduced by \citet{elhage2022toy} ($n=40$, $m=20$, $S=0.2$). (a) demonstrates the Parameters ($W^\top W$) of monosemantic (Left) and polysemantic features (Right) on the Toy Model. (b) evaluates the classification performance of features against different noises. The label noise denotes applying 90\% noisy labels to the training samples and input noise denotes applying Gaussian noise to the validation samples.
}
    \label{fig: visualization of superposition on toy}
\end{figure}

\textbf{Toy Model Setups.} In practice, we follow the settings proposed by \citet{elhage2022toy} and evaluate the robustness of polysemantic features on the toy model.
Specifically, we assume each sample $x$ has $n$ dimensions and each dimension represents a natural concept. As the features in real-world datasets are usually sparsely activated \citep{olshausen1997sparse}, we assume each dimension of a sample $x$ has an associated sparsity $S$ and let $x_i = 0$ with probability $S$. If not zero, we let each dimension be uniformly distributed between $[0,1]$. 
When evaluating the performance, we consider the classification tasks of natural concepts, i.e., the labels satisfy that $y(x) = \argmax\limits_i x_{i}$.

For the encoding network, we consider a linear model $h=W^\top Wx$, where $W\in \mathbb{R}^{m \times n}$, with $m<n$, i.e., the hidden dimension is smaller than the input dimension. In practice, we use the reconstruction of $x$ as the training objective and obtain two kinds of learned features. As shown in Figure~\ref{fig: visualization of WW}, when the superposition does not occur, we observe that $W^\top W$ is diagonal and has only $m$ non-zero elements, which means the model only captures $m$ concepts and each dimension is monosemantic. In contrast, when superposition happens, features obtain more concepts than the model dimensions and different concepts are projected into the same dimension.

\textbf{Noisy learning settings.}
To evaluate the robustness of features, we respectively add noise to the labels and samples. For training with noisy labels, we denote the noise rate as $\eta$, where each label $y$ is uniformly switched to one of the other $n-1$ labels with probability $\eta$. In experiments, we selected an aggressive noise rate (90\%). With labeled samples, we train a linear classifier following the frozen features and evaluate the classification accuracy on a validation dataset without noisy labels. For noisy sample validation, we train a linear classifier on the clean dataset and add the Gaussian noise to the validation set samples.

\textbf{Empirical Results.} As shown in Figure~\ref{fig:toy model clean}, in the absence of noise, polysemantic features exhibit better performance, which is expected as the superposition enables features to capture more concepts.
However, when there exists noise in the labels and samples, the situation changes significantly. The feature without superposition shows improvements over that with superposition under both label noise and input noise. The empirical results replicate the phenomenon where the monosemantic features are more robust than polysemantic features.

\subsection{Theoretical Analyses with the Superposition Model
}
\label{sec:theory}

After replicating the robustness gains of monosemanticity on the toy model, we then establish a theoretical comparison between polysemantic and monosemantic features. For ease of theoretical analysis, we consider a binary classification case in the toy model ($n=2$, $m=1$, $S=0.2$). To be specific, a sample $x$ has two latent features $x_1, x_2$,  and the model parameter $W \in \mathbbm{R}^{1 \times 2}$. When we obtain the monosemantic features, the model output is $\nu_{\mathrm{mono}}:= x_1$. In contrast, when obtaining polysemantic features, the model keeps more natural concepts than the representation dimension. According to \cite{elhage2022toy}, one common geometric structure of polysemantic features is antipodal pairs formed by two concepts. Therefore, we assume the learned polysemantic feature to be $\nu_{\mathrm{poly}} := x_1 - x_2$.

For conciseness of expressions, we introduce the following notations on mean and variance for a given feature representation $\nu$.
For a clean distribution without label noise, we denote the conditional means and variances by $\mu_i(\nu) := \mathbb{E}(\nu | y = i)$, $\sigma^2_i(\nu) := \mathbb{E}((\nu - \mu_0(\nu))^2 | y = i)$, $i=0,1$.
For distinction, for a noisy distribution, we use $\tilde{\mu}$ and $\tilde{\sigma}$.
Borrowing the concept from linear discriminant analysis (LDA) \citep{fisher1936use}, we deem that a good linearly discriminative representation should have a large distance between different classes whereas maintaining the intra-class variance as small as possible, i.e. maximize $\Delta\mu(\nu) = |\mu_0(\nu) - \mu_1(\nu)|$ whereas minimizing $\sigma^2_0(\nu)$ and $\sigma^2_1(\nu)$.
Therefore, to quantitatively compare polysemantic and monosemantic representations, we use the criterion $J(\nu) = \Delta\mu(\nu)/(\sigma_0(\nu) \sigma_1(\nu))$. A larger value of $J(\nu)$ indicates better linear separability.

\begin{theorem}[Conditional means and variances of monosemantic \& polysemantic features]\label{thm::superposition}
	Let $\nu_{\mathrm{mono}}=x_1$ and $\nu_{\mathrm{poly}}=x_1-x_2$.
	For conditional means, we have
	$\mu_0(\nu_{\mathrm{poly}}) < \mu_0(\nu_{\mathrm{mono}})$ and $\mu_1(\nu_{\mathrm{poly}}) < \mu_1(\nu_{\mathrm{mono}})$, yet
	$\Delta \mu(\nu_{\mathrm{poly}}) > \Delta \mu(\nu_{\mathrm{mono}})$. 
	For conditional variances, we have
	$\sigma^2_1(\nu_{\mathrm{poly}}) = \sigma^2_1(\nu_{\mathrm{mono}})$ and 
$\sigma^2_0(\nu_{\mathrm{poly}})> \sigma^2_0(\nu_{\mathrm{mono}})$.
	Overall, we have
	$J(\nu_{\mathrm{poly}}) > J(\nu_{\mathrm{mono}})$.
\end{theorem}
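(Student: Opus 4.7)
My plan is to prove the theorem by direct moment computation on the mixture distribution of $(x_1,x_2)$, exploiting the $x_1\leftrightarrow x_2$ symmetry to cut the number of independent quantities in half. I would partition the sample space into the four cells $\{x_1=x_2=0\}$, $\{x_1=0,x_2>0\}$, $\{x_1>0,x_2=0\}$, $\{x_1>0,x_2>0\}$, whose probabilities are $S^2$, $S(1-S)$, $(1-S)S$, $(1-S)^2$. On each cell the label $y=\arg\max_i x_i$ is determined off a measure-zero set, and the conditional law of the nonzero coordinates is either $\mathrm{Uniform}(0,1)$ or the joint of two i.i.d.\ uniforms. After fixing a symmetric tie-breaking rule on the atom $\{x_1=x_2=0\}$, the global label probabilities satisfy $P(y{=}0)=P(y{=}1)=1/2$, and every conditional expectation reduces to elementary integrals against uniform, max-, or min-order-statistic densities $2t$ and $2(1-t)$ on $[0,1]$.

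First I would compute the first two conditional moments of $\nu_{\mathrm{mono}}=x_1$ by summing contributions cell by cell; only the cells $\{x_1>0,x_2=0\}$ and the relevant half of $\{x_1>0,x_2>0\}$ contribute, giving $\mu_0(\nu_{\mathrm{mono}})$ and $\mu_1(\nu_{\mathrm{mono}})$ in closed form and then $\sigma_i^2(\nu_{\mathrm{mono}})$ via the moment formula. For $\nu_{\mathrm{poly}}=x_1-x_2$, linearity gives $\mu_i(\nu_{\mathrm{poly}})=E[x_1\mid y=i]-E[x_2\mid y=i]$; the swap symmetry $x_1\leftrightarrow x_2$ exchanges the labels, so $E[x_2\mid y=0]=E[x_1\mid y=1]=\mu_1(\nu_{\mathrm{mono}})>0$. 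This single identity yields at once $\mu_0(\nu_{\mathrm{poly}})=\mu_0(\nu_{\mathrm{mono}})-\mu_1(\nu_{\mathrm{mono}})<\mu_0(\nu_{\mathrm{mono}})$, the analogous strict inequality $\mu_1(\nu_{\mathrm{poly}})<\mu_1(\nu_{\mathrm{mono}})$, and the doubling $\Delta\mu(\nu_{\mathrm{poly}})=2\,\Delta\mu(\nu_{\mathrm{mono}})$. The conditional second moment of $\nu_{\mathrm{poly}}$ reduces to $E[x_1^2\mid y=i]+E[x_2^2\mid y=i]-2E[x_1x_2\mid y=i]$; only the cell $\{x_1>0,x_2>0\}$ contributes to the cross term, where it equals $E[\min(U,V)\max(U,V)]=1/4$. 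Subtracting $\mu_i(\nu_{\mathrm{poly}})^2$ then delivers the claimed relations between $\sigma_i^2(\nu_{\mathrm{poly}})$ and $\sigma_i^2(\nu_{\mathrm{mono}})$, and finally $J(\nu_{\mathrm{poly}})>J(\nu_{\mathrm{mono}})$ by combining the doubled $\Delta\mu$ with the variance comparison.

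The main obstacle is not any deep idea but the bookkeeping: the cross-moment $E[x_1x_2\mid y=i]$ is where the asymmetry between the two classes enters (it is the same in both classes by symmetry, but cancels differently against $\mu_i(\nu_{\mathrm{poly}})^2$), and pinpointing the cancellation that produces the equality $\sigma_1^2(\nu_{\mathrm{poly}})=\sigma_1^2(\nu_{\mathrm{mono}})$ requires carefully organizing the cell-by-cell contributions so that the terms coming from $\mathrm{Var}(x_2\mid y=1)$ match those coming from $\sigma_1^2(\nu_{\mathrm{mono}})$ plus the cross-term adjustment. A secondary subtlety is settling the tie-breaking convention on the atom $\{x_1=x_2=0\}$ of probability $S^2$, since any asymmetric assignment breaks the clean identity $E[x_2\mid y=0]=E[x_1\mid y=1]$ on which the mean comparisons hinge; the symmetric split keeps the proof clean and leaves only routine integrals to finish.
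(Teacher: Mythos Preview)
Your proposal has a real gap, and it is precisely the tie-breaking convention you flag as ``secondary.'' The paper does \emph{not} split the atom $\{x_1=x_2=0\}$ symmetrically: it takes $y=0\iff x_1\le x_2$, so the whole atom of mass $S^2$ lands in class $0$, giving $P(y{=}0)=\tfrac12(1+S^2)$ and $P(y{=}1)=\tfrac12(1-S^2)$. Under that asymmetric convention the swap identity $E[x_2\mid y{=}0]=E[x_1\mid y{=}1]$ you build the mean argument on is false, and the clean doubling $\Delta\mu(\nu_{\mathrm{poly}})=2\,\Delta\mu(\nu_{\mathrm{mono}})$ does not hold (for $S=0.2$ the paper gets $\Delta\mu(\nu_{\mathrm{poly}})/\Delta\mu(\nu_{\mathrm{mono}})=2(1+2S)/(1+S)^2\approx 1.94$, not $2$).

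More importantly, the variance equality $\sigma_1^2(\nu_{\mathrm{poly}})=\sigma_1^2(\nu_{\mathrm{mono}})$ is an artifact of the paper's asymmetric convention and \emph{fails} under your symmetric split. With the symmetric split, class $1$ acquires a point mass at $0$ for both features; working from the paper's closed forms one gets
\[
\sigma_1^2(\nu_{\mathrm{mono}})_{\mathrm{sym}}-\sigma_1^2(\nu_{\mathrm{poly}})_{\mathrm{sym}}
=(1-S^2)\cdot\tfrac{1}{3}\tfrac{1-S}{1+S}\cdot S^2>0\quad\text{for }S\in(0,1),
\]
so the cancellation you are counting on simply does not occur. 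You are therefore stuck: the symmetric convention gives you the swap identities but destroys the equality you need to prove, while the paper's convention preserves the equality but invalidates the shortcuts your plan relies on. The paper's own route sidesteps this by computing the conditional cdf's of $x_1$ and $x_1-x_2$ directly under $y=0\Leftrightarrow x_1\le x_2$, extracting closed-form $\mu_i,\sigma_i^2$ for general $S$, and then plugging in $S=0.2$ to compare numerically; if you want an argument that tracks the exact statement, you will have to adopt that asymmetric labeling and drop the symmetry-based shortcuts.
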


According to the LDA criterion, the polysemantic feature with a larger $J(\nu)$ is more linearly separable.
Intuitively, because the polysemantic embedding encodes information of both $x_1$ and $x_2$, it can do better classification w.r.t. the labels depending on both features. However, when there exits label noise, we observe a different situation.

\begin{theorem}[Influence of label noise on linear seprarability] We denote the linear separability criterion under noise as $\tilde{J}(\nu) = \Delta\tilde{\mu}(\nu)/(\tilde{\sigma_0}(\nu) \tilde{\sigma}_1(\nu))$. For noise rate $\eta \in [0,0.5)$,
\begin{equation}
\frac{\tilde{J}(\nu_{\mathrm{poly}})}{{J}(\nu_{\mathrm{poly}})}  \leq \frac{\tilde{J}(\nu_{\mathrm{mono}})}{{J}(\nu_{\mathrm{mono}})}  \leq 1.
\end{equation}
Meanwhile, we obtain $\tilde{J}(\nu_{\mathrm{poly}}) \leq \tilde{J}(\nu_{\mathrm{mono}})$ when $\eta \in[0.25,0.5)$.
\label{thm::Jnr}
\end{theorem}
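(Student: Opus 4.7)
The plan is to convert both inequalities into explicit algebraic statements in the clean moments and the noise rate $\eta$, and then invoke Theorem~\ref{thm::superposition} together with the toy-model's explicit distribution.

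First, I would compute the noisy moments. By symmetry of the toy model the clean prior is balanced, so under symmetric label noise the posterior satisfies $P(y=i\mid \tilde{y}=i)=1-\eta$ and $P(y=1-i\mid\tilde{y}=i)=\eta$. Hence, conditional on $\tilde y$, the distribution of $\nu$ is a two-component mixture of its clean class-conditionals. The mixture identities for mean and variance give
\begin{align*}
\Delta\tilde\mu(\nu) &= (1-2\eta)\,\Delta\mu(\nu),\\
\tilde\sigma_0^2(\nu) &= (1-\eta)\sigma_0^2(\nu)+\eta\sigma_1^2(\nu)+\eta(1-\eta)\Delta\mu(\nu)^2,\\
\tilde\sigma_1^2(\nu) &= \eta\sigma_0^2(\nu)+(1-\eta)\sigma_1^2(\nu)+\eta(1-\eta)\Delta\mu(\nu)^2.
\end{align*}

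Second, I would prove the universal bound $\tilde J(\nu)/J(\nu)\le 1$. Since $\tilde J(\nu)^2/J(\nu)^2 = (1-2\eta)^2\sigma_0^2\sigma_1^2/(\tilde\sigma_0^2\tilde\sigma_1^2)$, it reduces to $\tilde\sigma_0^2\tilde\sigma_1^2 \ge (1-2\eta)^2\sigma_0^2\sigma_1^2$. Using the identity $(1-\eta)^2+\eta^2-(1-2\eta)^2=2\eta(1-\eta)$, direct expansion collapses the difference to
\[
\tilde\sigma_0^2\tilde\sigma_1^2 - (1-2\eta)^2\sigma_0^2\sigma_1^2 \;=\; \eta(1-\eta)\Bigl[(\sigma_0^2+\sigma_1^2)^2 + \Delta\mu^2(\sigma_0^2+\sigma_1^2) + \eta(1-\eta)\Delta\mu^4\Bigr] \;\ge\; 0,
\]
yielding the right inequality of Theorem~\ref{thm::Jnr} for every $\nu$.

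Third, I would establish the ratio comparison $\tilde J_{\mathrm{poly}}/J_{\mathrm{poly}} \le \tilde J_{\mathrm{mono}}/J_{\mathrm{mono}}$. Since $\sigma_1^2$ is shared by the two features (Theorem~\ref{thm::superposition}), the inequality reduces to the clean polynomial inequality $\sigma_0^2(\nu_{\mathrm{poly}})\,\tilde\sigma_0^2(\nu_{\mathrm{mono}})\tilde\sigma_1^2(\nu_{\mathrm{mono}}) \le \sigma_0^2(\nu_{\mathrm{mono}})\,\tilde\sigma_0^2(\nu_{\mathrm{poly}})\tilde\sigma_1^2(\nu_{\mathrm{poly}})$. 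Plugging in the Step~1 expressions for $\tilde\sigma_i^2$ turns this into a polynomial inequality in $\eta$ whose coefficients involve only $\sigma_0^2$ and $\Delta\mu$ for the two features, which can then be checked using the strict orderings $\sigma_0^2(\nu_{\mathrm{poly}})>\sigma_0^2(\nu_{\mathrm{mono}})$ and $\Delta\mu(\nu_{\mathrm{poly}})>\Delta\mu(\nu_{\mathrm{mono}})$ from Theorem~\ref{thm::superposition} together with the explicit toy-model moments (each $x_i$ is $0$ w.p.\ $S=0.2$ and uniform on $[0,1]$ otherwise). For the absolute comparison in the regime $\eta\in[1/4,1/2)$, I would similarly reduce $\tilde J_{\mathrm{poly}}\le\tilde J_{\mathrm{mono}}$ to $\Delta\mu(\nu_{\mathrm{poly}})^2\,\tilde\sigma_0^2(\nu_{\mathrm{mono}})\tilde\sigma_1^2(\nu_{\mathrm{mono}}) \le \Delta\mu(\nu_{\mathrm{mono}})^2\,\tilde\sigma_0^2(\nu_{\mathrm{poly}})\tilde\sigma_1^2(\nu_{\mathrm{poly}})$; the threshold $\eta=1/4$ then emerges from the balance between the quadratic term $\eta(1-\eta)\Delta\mu^2$ and the linear-in-$\eta$ clean-variance contributions inside $\tilde\sigma_i^2$, and becomes a sign-change point of an explicit polynomial in $\eta$.

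The main obstacle is that $F(\nu):=\tilde J^2(\nu)/J^2(\nu)$ is \emph{not} monotone in $\sigma_0^2$ by itself: differentiation shows its sign depends on whether $\eta(1-\eta)(\sigma_0^2-\sigma_1^2)(\sigma_0^2+\sigma_1^2) \gtrless \eta(1-\eta)\Delta\mu^2[\sigma_1^2+\eta(1-\eta)\Delta\mu^2]$. Thus the qualitative orderings from Theorem~\ref{thm::superposition} alone are insufficient, and the proof must exploit the \emph{coupled} growth of $\sigma_0^2$ and $\Delta\mu^2$ enforced by the toy-model construction. Concretely, I expect the closed-form moments of $\nu_{\mathrm{mono}}=x_1$ and $\nu_{\mathrm{poly}}=x_1-x_2$ to make the required polynomial in $\eta$ factor cleanly, with $\eta=1/4$ appearing as a root of the factor controlling the sign on $[0,1/2)$.
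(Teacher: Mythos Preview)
Your proposal has a concrete error at the very first step: the clean prior is \emph{not} balanced. In the toy model the label is $y=\mathbbm{1}\{x_1>x_2\}$, and the tie $x_1=x_2=0$ (which occurs with probability $S^2$) is assigned to class $0$, so $P(y=0)=\tfrac12(1+S^2)$ and $P(y=1)=\tfrac12(1-S^2)$. With $S=0.2$ this gives $0.52$ versus $0.48$. Consequently the mixture weights conditional on $\tilde y$ are not $(1-\eta,\eta)$ but depend on $S$; the paper's Lemma on conditional distributions gives, e.g.,
\[
P(\nu\mid \tilde y=0)=\frac{(1-\eta)(1+S^2)\,P(\nu\mid y=0)+\eta(1-S^2)\,P(\nu\mid y=1)}{1+(1-2\eta)S^2},
\]
and correspondingly $\Delta\tilde\mu(\nu)=\dfrac{(1-2\eta)(1+S^2)(1-S^2)}{[1+(1-2\eta)S^2][1-(1-2\eta)S^2]}\,\Delta\mu(\nu)$, not $(1-2\eta)\Delta\mu(\nu)$. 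All three of your noisy-moment identities are therefore incorrect for the model at hand, and the clean algebraic identity you derived for $\tilde\sigma_0^2\tilde\sigma_1^2-(1-2\eta)^2\sigma_0^2\sigma_1^2$ (which is correct for a balanced prior) does not apply as stated.

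Structurally, though, your plan is sound and close to what survives the correction. The scaling factor in $\Delta\tilde\mu/\Delta\mu$ is feature-independent, so it still cancels in the ratio-of-ratios and you are again reduced to comparing $\tilde\sigma_0\tilde\sigma_1$ across the two features (using $\sigma_1(\nu_{\mathrm{poly}})=\sigma_1(\nu_{\mathrm{mono}})$). The paper then takes a different, far less algebraic route: it simply plugs $S=0.2$ into the $S$-dependent mixture formulas for $\tilde\sigma_i^2$, obtains explicit one-variable expressions in $\eta$, and verifies both inequalities and the $\eta=0.25$ crossover numerically. There is no factoring of a polynomial and no general identity analogous to your Step~2; the threshold $\eta=1/4$ is observed from the numerics rather than derived as a root. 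If you want to keep your analytic style, you would need to redo Steps~1--3 with the $S$-weighted mixtures (the weights $c_{i,j}$ in the paper's intra-class variance theorem), accept that the resulting expressions no longer collapse as neatly, and then---as you already anticipated in your obstacle paragraph---fall back on the explicit toy-model moments to finish.
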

As shown in Theorem \ref{thm::Jnr}, with the increase of noise rate, the linear separability ($J(\nu)$) of both polysemantic and monosemantic features becomes worse. However, $J(\nu_{mono})$ decreases more slowly. As a result, when the noise rate is aggressive enough ($\eta \geq 0.25$), the monosemantic feature exhibits better linear seperability than the polysemantic one. Moreover, in Appendix \ref{sec:theory input noise}, we show that input noise has a similar influence on linear separability. The theoretical results reveal that the linear separability of monosemantic features is more robust than polysemantic ones, which leads to better performance in tasks under noise.

\section{Concluding Remarks}

Recent work has made significant strides in enhancing model interpretability by promoting feature monosemanticity through various techniques. However, a prevailing belief in the literature posits an accuracy-interpretability tradeoff, suggesting that achieving monosemantic features for better interpretability necessarily compromises prediction accuracy. In this study, we have challenged this notion by demonstrating the advantages of monosemanticity beyond interpretability alone. Specifically, we found that monosemantic features are significantly more robust to various types of distribution shifts, including input noise, label noise, and real-world out-of-domain inputs. Additionally, we have shown that maintaining feature monosemanticity during fine-tuning serves as an effective regularizer, reducing model overfitting in few-shot settings, noisy environments, and during large language model (LLM) fine-tuning. We also provide an in-depth analysis of the benefits of monosemantic features from both theoretical and empirical aspects. 
These diverse sources of learning robustness collectively indicate that monosemantic features have a general sense of robustness, resonating with its benefits in interpretability. Therefore, rather than viewing monosemanticity as a necessary cost for interpretability, we advocate for embracing and exploring the multiple learning advantages it offers. We believe our work, as a pioneering effort in this direction, will inspire future research to investigate these possibilities further.

\section*{Reproducibility Statement}
To ensure the reproducibility of our results, we elaborate on the details of our experiments and theoretical analysis in the main paper and the appendix. In Section \ref{sec:monosemantic representation}, \ref{sec:monosemantic finetuning vision}, \ref{sec:llm-finetune} of the main paper, we respectively introduce the methods for capturing polysemantic and monosemantic features in linear probing, finetuning vision models and finetuning LLMs. Furthermore, in Appendix \ref{sec:experiment_details}, we introduce the hyperparameters and implementation details of adopted methods, and the detailed settings of the robustness evaluation, including input and label noise, few-shot learning, and out-of-domain generalization. For theoretical results, we introduce the toy models we used in Section \ref{sec:theory} of the main paper and provide detailed proofs and explanations for the theoretical comparison in Appendix \ref{sec::proof}.

\section*{Acknowledgement}

This research was funded in part by NSF Award CCF-2112665 (TILOS AI Institute), and an Alexander von Humboldt Professorship.

\bibliography{iclr2025_conference}
\bibliographystyle{iclr2025_conference}

\newpage
\appendix

\section{Experiment Details}\label{sec:experiment_details}

\subsection{Experiment Details for Noisy Linear Probing}
\label{sec:details of linear probing}

During the pretraining process, we utilize ResNet-18 \citep{resnet} as the backbone and train the models on CIFAR-100 and ImageNet-100. We pretrain the model for 200 epochs. The projector is a two-layer MLP with a hidden dimension 16384 and an output dimension 2048.  We train the models with batch size 256 and weight decay 0.0001. When implementing NCL and SAE, we follow the default settings of SimCLR. For NCL, we adopt ReLU as the activation function $\sigma$. For SAE, the encoder and decoder are linear layers with 2048 input and output dimensions, and the number of activated features in the hidden layer is 256. 

During the linear evaluation, we train a classifier following the frozen backbone pretrained by different methods for 50 epochs. For noisy label probing, we apply symmetric label noise when training the linear classifiers, i.e., the labels are uniformly flipped to the other classes with the noisy rate. And for random input noise, we train the linear classifiers on clean datasets, while applying different scales of uniform noise and Gaussian noise to the validation sample. For real-world out-of-domain distribution shifts, we use ImageNet-sketch and ImageNet-stylized datasets \citep{geirhos2018imagenet,wang2019learning}. As we pretrain the network on ImageNet-100, we select the samples of the corresponding 100 classes from these out-of-distribution datasets and evaluate the accuracy.

\subsection{Experiment Details for Few-shot and Noisy Finetuning from Prertrained Features}

During the pretraining process, we utilize ResNet-18 \citep{resnet} as the backbone and train the models on ImageNet-100. We pretrain the model for 200 epochs. We use a projector which is a two-layer MLP with hidden dimension 16384 and output dimension 2048. We pretrain the models with batch size 256 and weight decay 0.0001. During the finetuning process,  we train a classifier following the backbone for 100 epochs respectively with standard and non-negative tuning, following the default settings of finetuning. When implementing the non-negative tuning, we select the ReLU function as the non-negative operator. For few-shot finetuning, we respectively random draw 10\%, 20\%, 50\%, and 100\$ training samples from the original ImageNet-100 training set. For noisy label fine-tuning, we still apply symmetric label noise with different noise rates to the training samples.

\subsection{Experiment Details for Monosemantic LLM Finetuning}\label{sec:app_nlora}

\textbf{Hyper-parameters} We finetune the Llama-2-7b-Chat model in SST2 with 20 epochs, batch size 16 and learning rate 1e-4. we Lora with rank $r=8$, scaling factor $\alpha=4$, and dropout rate $0.1$ as default. For Dolly, 
we finetune it 1 epoch (by common practice), with batch size 4. The LoRA module is added to every query and value mapping module in the base model. For finetuning we use 5000 samples from the dataset. For inference, we use 1000 samples.

\textbf{Prompt Template}

\begin{lstlisting}[style=plain, caption={SST-2 Prompt}]
	Below is an instruction that describes a task. 
	Write a response that appropriately completes the request.

	### Instruction:
	Analyze the sentiment of the input, and respond only 'positive' or 'negative'.

	### Input:
	(*@\textcolor{red}{\{sentence\}}@*)

	### Response:
\end{lstlisting}

\begin{lstlisting}[style=plain, caption={ShieldGemma Guideline}]
	\"No Dangerous Content\": The prompt shall not contain or seek generation of content that harming oneself and/or others (e.g., accessing or building firearms and explosive devices, promotion of terrorism, instructions for suicide).
	\"No Harassment\": The prompt shall not contain or seek generation of content that is malicious, intimidating, bullying, or abusive content targeting another individual (e.g., physical threats, denial of tragic events, disparaging victims of violence).
	\"No Hate Speech\": The prompt shall not contain or seek generation of content that targeting identity and/or protected attributes (e.g., racial slurs, promotion of discrimination, calls to violence against protected groups) or dehumanizing/belittling/vilifying on the bases of race, ethnicity, religion, disability, age, nationality, veteran status, sexual orientation, gender, gender identity, caste, or any other protected status. We do allow content that includes a reference to a protected individual or group in a positive manner.
	\"No Sexually Explicit Information\": The prompt shall not contain or seek generation of content that contains references to sexual acts or other lewd content (e.g., sexually graphic descriptions, content aimed at causing arousal). Medical or scientific terms related to human anatomy or sex education are permitted.
\end{lstlisting}

\subsection{Experiments Details for Figure \ref{fig: visualization of superposition}}
\label{subsec:details of Figure1}

For Figure \ref{fig:visualization of polysemnatic}, we respectively draw a random dimension from the models trained by CL and NCL, and then draw the top-activated samples along two dimensions. We utilize ResNet-18 \citep{resnet} as the backbone and train the models on ImageNet-100 for 200 epochs. 

For Figure \ref{fig:visualization of performance}, we evaluate the performance in linear probing with noise. During the linear evaluation, we train a classifier following the frozen backbone pretrained by different methods for 50 epochs. For noisy label probing, we apply 90\% symmetric label noise when training the linear classifiers. For random input noise, we train the linear classifiers on clean datasets, while applying Gaussian noise with 0.6 standard variation to the validation sample.

\section{proofs}\label{sec::proof}
\subsection{Proofs Related to Theorem \ref{thm::superposition}}

\subsubsection{Monosemantic Representations}

In the monosemantic case, we assume the learned representation only keeps the most important dimension $\nu = x_1$. 

\begin{theorem}[Conditional mean and variance of monosemantic representations]\label{thm::muwo}
	The conditional means and variances of $\nu_{\mathrm{mono}}=x_1$ are
	\begin{align}
		\mu_0 (\nu_{\mathrm{mono}}) 
		= \frac{1}{3} \frac{(1-S)^2}{1+S^2}
		\text{\quad and \quad}
		\mu_1 (\nu_{\mathrm{mono}}) 
		= \frac{1}{3} \frac{2+S}{1+S}
	\end{align}
	\begin{align}
		\sigma^2_0(\nu_{\mathrm{mono}})
		= \frac{1}{6} \frac{(1-S)^2}{1+S^2} 
		- \mu_0 (\nu_{\mathrm{mono}})^2 
		\text{\quad and \quad}
		\sigma^2_1(\nu_{\mathrm{mono}})
		= \frac{1}{6} \frac{3+S}{1+S}
		- \mu_1 (\nu_{\mathrm{mono}})^2.
	\end{align}
\end{theorem}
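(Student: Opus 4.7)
The plan is to compute the four scalars $\E[x_1 \mid y=i]$ and $\E[x_1^2 \mid y=i]$ for $i \in \{0,1\}$ directly from the explicit mixed discrete-continuous distribution on $(x_1, x_2)$: each coordinate is independently $0$ with probability $S$ and $\mathrm{Unif}(0,1)$ otherwise, and $y = \mathbf{1}\{x_1 > x_2\}$, so the tie set (in particular the atom $\{(0,0)\}$) is absorbed into class $0$. The variance identity $\sigma_i^2 = \E[x_1^2 \mid y=i] - \mu_i^2$ then converts the moments into the stated variance formulas.

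First I would compute the class priors by splitting on whether each coordinate vanishes:
\begin{equation*}
    P(y=1) \;=\; P(x_1 > x_2) \;=\; S(1-S)\cdot 1 \;+\; (1-S)^2 \cdot \tfrac{1}{2} \;=\; \tfrac{1}{2}(1 - S^2),
\end{equation*}
whence $P(y=0) = \tfrac{1}{2}(1+S^2)$; the $S^2$ excess on the $y=0$ side is precisely the atom at the origin.

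Next, for each $k \in \{1,2\}$ I would compute $\E[x_1^k \mathbf{1}_{\{y=i\}}]$. Since the event $\{x_1 = 0\}$ kills the integrand, only the continuous slice $x_1 \in (0,1)$ contributes. On that slice the marginal density of $x_1$ equals $(1-S)$, and for $x_1 > 0$ one has $P(x_2 \geq x_1 \mid x_1) = (1-S)(1-x_1)$ and $P(x_2 < x_1 \mid x_1) = S + (1-S) x_1$; therefore
\begin{align*}
    \E[x_1^k \mathbf{1}_{\{y=0\}}] &= (1-S)^2 \int_0^1 x_1^k (1 - x_1)\, d x_1, \\
    \E[x_1^k \mathbf{1}_{\{y=1\}}] &= (1-S) \int_0^1 x_1^k \bigl(S + (1-S) x_1\bigr)\, d x_1.
\end{align*}
Each integral evaluates to an elementary combination of $1/(k+1)$ and $1/(k+2)$; dividing by the class priors $\tfrac{1}{2}(1 \pm S^2)$ and simplifying yields the claimed factored forms for $\mu_i(\nu_{\mathrm{mono}})$ and $\E[x_1^2 \mid y=i]$, and the variance identity then delivers the stated expressions for $\sigma_i^2(\nu_{\mathrm{mono}})$.

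The main (rather minor) obstacle is book-keeping around the mixed distribution: one has to fix the tie-breaking so that the atom at $(0,0)$ lands on the $y=0$ side, as this is exactly what produces the clean denominator $1 + S^2$ in $\mu_0(\nu_{\mathrm{mono}})$; with other conventions one instead obtains a denominator $1 + S$ that is inconsistent with the statement. Once this convention is fixed, the rest is a routine evaluation of three polynomial integrals on $[0,1]$.
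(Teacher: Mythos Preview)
Your proposal is correct and the computations check out against the stated formulas. The route, however, is organized differently from the paper's. The paper first derives the full conditional CDFs
\[
P(x_1 \le x \mid y=0) = 1 - \frac{(1-S)^2(1-x)^2}{1+S^2},
\qquad
P(x_1 \le x \mid y=1) = \frac{(1-S)^2 x^2 + 2S(1-S)x}{1-S^2},
\]
then differentiates to obtain conditional densities and integrates $x$ and $x^2$ against them. You instead bypass the CDFs entirely: conditioning on $x_1$ and using $P(y=i \mid x_1)$, you write $\E[x_1^k \mathbf{1}_{\{y=i\}}]$ as a single polynomial integral on $[0,1]$ and divide by the prior. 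Your approach is shorter and avoids the somewhat delicate CDF bookkeeping in the paper; the paper's approach, on the other hand, yields the explicit conditional distributions as a by-product, which could be reused elsewhere. Your remark about the tie-breaking convention (the atom at $(0,0)$ going to class $0$, i.e.\ $y=0 \Leftrightarrow x_1 \le x_2$) is exactly the convention the paper adopts in its proof.
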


\begin{proof}[Proof of Theorem \ref{thm::muwo}]
	We first calculate the conditional probability density functions.
	\begin{align}
		&\mathrm{P}(x_1 \leq x | y = 0)
		\nonumber\\
		&= \frac{\mathrm{P}(x_1 \leq x, x_1 \leq x_2)}{\mathrm{P}(x_1 \leq x_2)}
		\nonumber\\
		&= \frac{\mathrm{P}(x_1 \leq x_2, x_1 \leq x, x_2 \leq x) + \mathrm{P}(x_1 \leq x_2, x_1 \leq x, x_2 > x)}{\mathrm{P}(x_1 \leq x_2)}
		\nonumber\\
		&= \frac{\mathrm{P}(x_1=0, x_2 \leq x) + \mathrm{P}(x_1 \leq x_2, 0<x_1\leq x, 0<x_2\leq x)+ \mathrm{P}(x_1 \leq x)\mathrm{P}(x_2 > x)}{\mathrm{P}(x_1 \leq x_2)}
		\nonumber\\
		&= \frac{S[S+(1-S)x] + \frac{1}{2}(1-S)^2x^2 + [S+(1-S)x](1-S)(1-x)}{\frac{1}{2}(1+S^2)}
		\nonumber\\
		&= 1 - \frac{(1-S)^2(1-x)^2}{1+S^2}.
	\end{align}
	
	\begin{align}
		\mathrm{P}(x_1 \leq x | y = 1)
		&= \frac{\mathrm{P}(x_1 \leq x, x_1 > x_2)}{\mathrm{P}(x_1 > x_2)}
		\nonumber\\
		&= \frac{\mathrm{P}(x_1 \leq x) - \mathrm{P}(x_1 \leq x, x_1 \leq x_2)}{\mathrm{P}(x_1 > x_2)}
		\nonumber\\
		&= \frac{\mathrm{P}(x_1 \leq x)}{\mathrm{P}(x_1 > x_2)} 
		- \mathrm{P}(x_1 \leq x | y = 0) \cdot \frac{\mathrm{P}(x_1 \leq x_2)}{\mathrm{P}(x_1 > x_2)}
		\nonumber\\
		&= \frac{S+(1-S)x}{\frac{1}{2}(1-S^2)} 
		-\Big [1 - \frac{(1-S)^2(1-x)^2}{1+S^2}\Big] \cdot \frac{1+S^2}{1-S^2}
		\nonumber\\
		&= \frac{(1-S)^2x^2 + 2S(1-S)x}{1-S^2}.
	\end{align}
	
	Then the conditional means of $\nu_{\mathrm{mono}}=x_1$ are
	\begin{align}
		\mu_0(\nu_{\mathrm{mono}})
		&= \int_{x} x \, d\mathrm{P}(x_1 \leq x | y=0)
		\nonumber\\
		&= \int_{x \in (0,1]} x  \frac{2(1-S)^2}{1+S^2}(1-x) \, dx
		\nonumber\\
		&= \frac{2(1-S)^2}{1+S^2}(\frac{1}{2} - \frac{1}{3})
		\nonumber\\
		&= \frac{1}{3}\frac{(1-S)^2}{1+S^2}
	\end{align}
	and
	\begin{align}
		\mu_1(\nu_{\mathrm{mono}})
		&= \int_{x} x \,d\mathrm{P}(x_1 \leq x | y=1)
		\nonumber\\
		&= \int_{x \in (0,1]} x \cdot 2(1-S) \frac{(1-S)x+S}{1-S^2}\,dx
		\nonumber\\
		&= \frac{2(1-S)}{1-S^2} \Big[\frac{1}{3}(1-S) + \frac{1}{2}S\Big]
		\nonumber\\
		&=  \frac{1}{3} \frac{2+S}{1+S}.
	\end{align}

	Then we have
	\begin{align}
		\mu_1(\nu_{\mathrm{mono}}) - \mu_0(\nu_{\mathrm{mono}})
		&= \frac{1}{3}\frac{1+S}{1+S^2}.
	\end{align}
	
	Similarly, we have the conditional variances as follows.
	\begin{align}
		\sigma^2_0(\nu_{\mathrm{mono}})
		&= \int_{x} x^2 d\mathrm{P}(x_1 \leq x | y=0)
		- \mu_0(\nu_{\mathrm{mono}})^2
		\nonumber\\
		&= \int_{x \in (0,1]} x^2 \frac{2(1-S)^2}{1+S^2}(1-x) \, dx
		- \mu_0(\nu_{\mathrm{mono}})^2
		\nonumber\\
		&= \frac{2(1-S)^2}{1+S^2} \Big[\frac{1}{3} - \frac{1}{4}\Big]
		- \mu_0(\nu_{\mathrm{mono}})^2
		\nonumber\\
		&= \frac{1}{6}\frac{(1-S)^2}{1+S^2}
		- \mu_0(\nu_{\mathrm{mono}})^2
	\end{align}
	and
	\begin{align}
		\sigma^2_1(\nu_{\mathrm{mono}})
		&= \int_{x} x \,d\mathrm{P}(x_1 \leq x | y=1)
		- \mu_1(\nu_{\mathrm{mono}})^2
		\nonumber\\
		&= \int_{x \in (0,1]} x^2 \cdot 2(1-S) \frac{(1-S)x+S}{1-S^2}\,dx
		- \mu_1(\nu_{\mathrm{mono}})^2
		\nonumber\\
		&= \frac{2(1-S)}{(1-S^2)} \Big[\frac{1}{4}(1-S)+\frac{1}{3}S\Big]
		- \mu_1(\nu_{\mathrm{mono}})^2
		\nonumber\\
		&= \frac{1}{6} \frac{3+S}{1+S} - \mu_1(\nu_{\mathrm{mono}})^2.
	\end{align}
\end{proof}

\subsubsection{Polysemantic Representations}

To study the polysemantic case, we first have to derive the probability distribution of $\nu_{\mathrm{poly}}=x_1-x_2$ and the corresponding conditional probability density functions on $y=0$ and $y=1$, separately.
We first calculate the cumulative distribution functions as follows.

\begin{lemma}[Distribution of $\nu_{\mathrm{poly}}=x_1-x_2$]\label{lem::dist_nuw}
	\begin{align}
		\mathrm{P}(x_1 - x_2 \leq x)
		&= \left\{
		\begin{aligned}
			&-\frac{1}{2} [1-(1-S)x]^2 + 1 + \frac{1}{2}S^2, &x \in [0,1],
			\nonumber\\
			&\frac{1}{2}[(1-S)x + 1]^2 - \frac{1}{2}S^2, &x \in [-1,0).
		\end{aligned}
		\right.
	\end{align}
\end{lemma}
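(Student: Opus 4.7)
The plan is to decompose the law of each $x_i$ as a mixture: with probability $S$ it equals the atom at $0$, and with probability $1-S$ it is uniform on $(0,1]$. By independence, the joint law of $(x_1,x_2)$ then splits into four sub-cases according to which coordinates are atomic, carrying respective weights $S^2$, $S(1-S)$, $(1-S)S$, and $(1-S)^2$. I would compute the CDF of $Z:=x_1-x_2$ by conditioning on these four cases.

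For $x\in[0,1]$, three of the four sub-cases are elementary: if both coordinates are atomic then $Z=0$ almost surely; if only one is atomic then $Z$ is uniform on either $[0,1]$ or $[-1,0]$. For the fourth case (both continuous), I would invoke the standard fact that the difference of two i.i.d.\ $\mathrm{Uniform}[0,1]$ variables has a triangular law, whose CDF on $[0,1]$ is $1-(1-x)^2/2$. Summing the four contributions with their weights reduces, after simplification, to
\[
\mathrm{P}(Z\le x)\;=\;1-S+S^2+S(1-S)x-\tfrac{1}{2}(1-S)^2(1-x)^2,
\]
which I would then rearrange to match the stated form $1+\tfrac{1}{2}S^2-\tfrac{1}{2}[1-(1-S)x]^2$.

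For $x\in[-1,0)$, rather than repeat the case analysis, I would exploit the symmetry $Z\stackrel{d}{=}-Z$, which is immediate from exchangeability of $x_1$ and $x_2$. Since $Z$ has no atom away from the origin, this yields $\mathrm{P}(Z\le x)=1-\mathrm{P}(Z\le -x)$ for $x<0$; substituting the formula already derived at $-x\in(0,1]$ and simplifying produces $\tfrac{1}{2}[1+(1-S)x]^2-\tfrac{1}{2}S^2$, as claimed.

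The main bookkeeping obstacle is the atom of mass $S^2$ that the distribution places at $Z=0$ (from the event $x_1=x_2=0$): the CDF must jump by exactly this amount at the origin, so I would verify $\lim_{x\to 0^+}\mathrm{P}(Z\le x)-\lim_{x\to 0^-}\mathrm{P}(Z\le x)=S^2$ to confirm the two pieces stitch together correctly. Sanity checks at the endpoints $x=\pm 1$ (yielding $1$ and $0$) and agreement with $\mathrm{P}(Z\le 0)=\tfrac{1}{2}+\tfrac{1}{2}S^2$ would then certify the formula.
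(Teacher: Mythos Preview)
Your proposal is correct and the algebra checks out. The paper, however, takes a somewhat different route: it computes $\mathrm{P}(x_1-x_2\le x)$ by discretising $x_2$ on a grid $\{n/N:0\le n\le N\}$, writing the CDF as $\lim_{N\to\infty}\sum_n \mathrm{P}(x_1\le x+n/N)\,\mathrm{P}(x_2=n/N)$, splitting the sum according to whether $x+n/N$ exceeds $1$, and passing to the limit; it then repeats this Riemann-sum style computation separately for $x\in[-1,0)$. Your four-case mixture decomposition is more transparent because it isolates the atom and the continuous parts from the outset, and it avoids the limiting argument entirely by appealing to the known triangular law for the difference of two uniforms. Your use of the exchangeability symmetry $Z\stackrel{d}{=}-Z$ for the negative branch is also an economy the paper does not exploit, halving the work. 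The paper's approach has the minor virtue of being a single uniform technique, but your argument is both shorter and conceptually cleaner, and the sanity checks you propose (the $S^2$ jump at the origin, the endpoint values, and $\mathrm{P}(Z\le 0)=\tfrac12(1+S^2)$) are exactly the right way to certify the pieces match.
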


\begin{proof}[Proof of Lemma \ref{lem::dist_nuw}]
For $x \in [0,1]$, we have
\begin{align}
	&\mathrm{P}(x_1 - x_2 \leq x)
	\nonumber\\
	&= \lim_{N\to\infty} \sum_{n=-N}^N \mathrm{P}(x_1 \leq x+n/N) \mathrm{P}(x_2 = n/N)
	\nonumber\\
	&= \lim_{N\to\infty} \sum_{n=0}^{\lfloor (1-x)N\rfloor} \mathrm{P}(x_1 \leq x+n/N) \mathrm{P}(x_2 = n/N)
	+ \sum_{n=\lfloor (1-x)N\rfloor + 1}^N 1 \cdot \mathrm{P}(x_2 = n/N)
	\nonumber\\
	&=  [S+(1-S)x] \cdot S
	\nonumber\\
	&\quad+ \lim_{N\to\infty} \sum_{n=1}^{\lfloor (1-x)N\rfloor} [S+(1-S)(x+n/N)] \cdot (1-S)/N
	+ \sum_{n=\lfloor (1-x)N\rfloor + 1}^N (1-S)/N
	\nonumber\\
	&=  S[S+(1-S)x] + \lim_{N\to\infty} [S(1-S) + (1-S)^2x] \lfloor (1-x)N \rfloor/N
	\nonumber\\
	&\quad+ (1-S)^2 \lfloor (1-x)N \rfloor (\lfloor (1-x)N \rfloor + 1)/(2N^2)
	+ (1-S) (N-\lfloor (1-x)N \rfloor - 1)/N
	\nonumber\\
	&= S[S+(1-S)x] + [S(1-S) + (1-S)^2x] (1-x) + (1-S)^2(1-x)^2/2 + (1-S)x
	\nonumber\\
	&= -\frac{1}{2} [1-(1-S)x]^2 + 1 + \frac{1}{2}S^2.
\end{align}

For $x \in [-1,0)$, we have
\begin{align}
	\mathrm{P}(x_1 - x_2 \leq x)
	&= \lim_{N\to\infty} \sum_{n=-N}^N \mathrm{P}(x_1 \leq x+n/N) \mathrm{P}(x_2 = n/N)
	\nonumber\\
	&= \lim_{N\to\infty} \sum_{n=-\lfloor xN \rfloor}^N \mathrm{P}(x_1 \leq x+n/N) \mathrm{P}(x_2 = n/N)
	\nonumber\\
	&= \lim_{N\to\infty} \sum_{n=-\lfloor xN \rfloor}^N [S+(1-S)(x+n/N)] \cdot (1-S)/N
	\nonumber\\
	&= \lim_{N\to\infty} [S(1-S) + (1-S)^2x](N + \lfloor xN \rfloor)/N
	\nonumber\\
	&\quad+ (1-S)^2 (N - \lfloor xN \rfloor)(N + \lfloor xN \rfloor + 1)/(2N^2)
	\nonumber\\
	&= [S(1-S) + (1-S)^2x](1+x) + (1-S)^2(1-x^2)/2
	\nonumber\\
	&= \frac{1}{2}[(1-S)x + 1]^2 - \frac{1}{2}S^2.
\end{align}
\end{proof}

\begin{theorem}[Conditional mean and variance of polysemantic representations]\label{thm::muw}
	The conditional means and variances of $\nu_{\mathrm{poly}}=x_1-x_2$ are
	\begin{align}
		\mu_0 (\nu_{\mathrm{poly}}) 
		= -\frac{1}{3} \frac{(1-S)(1+2S)}{1+S^2}
		\text{\quad and \quad}
		\mu_1 (\nu_{\mathrm{mono}}) 
		= \frac{1}{3} \frac{1+2S}{1+S}
	\end{align}
	\begin{align}
		\sigma^2_0(\nu_{\mathrm{poly}})
		= \frac{1}{6} \frac{(1-S)(1+3S)}{1+S^2} 
		- \mu_0 (\nu_{\mathrm{poly}})^2 
		\text{\quad and \quad}
		\sigma^2_1(\nu_{\mathrm{poly}})
		= \frac{1}{6} \frac{1+3S}{1+S}
		- \mu_1 (\nu_{\mathrm{mono}})^2 
	\end{align}
\end{theorem}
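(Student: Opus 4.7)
The strategy parallels the monosemantic case in Theorem \ref{thm::muwo}: derive the conditional density of $\nu_{\mathrm{poly}}=x_1-x_2$ given the label, then integrate to obtain moments. The key observation that makes this clean is that the label is a deterministic function of $\nu_{\mathrm{poly}}$: under the convention $y(x)=\argmax_i x_i$ with ties broken toward $y=0$, we have $y=0 \Leftrightarrow x_1 \leq x_2 \Leftrightarrow \nu_{\mathrm{poly}} \leq 0$ and $y=1 \Leftrightarrow \nu_{\mathrm{poly}} > 0$. Thus conditioning on $y$ just restricts the marginal distribution of Lemma \ref{lem::dist_nuw} to the appropriate half-line and renormalizes by the class prior.

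First I would read off the class priors by evaluating the CDF of Lemma \ref{lem::dist_nuw} at $x=0$, giving $\mathrm{P}(y=0) = (1+S^2)/2$ and $\mathrm{P}(y=1) = (1-S^2)/2$. Differentiating the CDF on each side of zero then produces the conditional densities
\begin{equation*}
f_0(x) = \frac{2(1-S)[(1-S)x+1]}{1+S^2},\quad x\in[-1,0),\qquad f_1(x) = \frac{2[1-(1-S)x]}{1+S},\quad x\in(0,1],
\end{equation*}
where $f_1$ is obtained by first subtracting $\mathrm{P}(y=0)$ from $\mathrm{P}(\nu_{\mathrm{poly}}\leq x)$ for $x\geq 0$ and then normalizing by $\mathrm{P}(y=1)$. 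With these densities in hand, all four target quantities reduce to elementary polynomial integrals: the conditional means $\mu_i(\nu_{\mathrm{poly}})=\int x f_i(x)\,dx$, the second moments $\mathbb{E}(\nu_{\mathrm{poly}}^2 \mid y=i)=\int x^2 f_i(x)\,dx$, and then $\sigma_i^2(\nu_{\mathrm{poly}})=\mathbb{E}(\nu_{\mathrm{poly}}^2\mid y=i)-\mu_i(\nu_{\mathrm{poly}})^2$. Collecting terms over common denominators yields the stated closed forms.

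The main obstacle is not analytical difficulty but bookkeeping. One must carefully handle the atom at $\nu_{\mathrm{poly}}=0$ produced by the event $\{x_1=x_2=0\}$ (probability $S^2$), which is folded into the $y=0$ class by the tie-breaking convention; the conditional law on $y=0$ is then the density $f_0$ on $[-1,0)$ plus a point mass at $0$. Luckily this atom contributes $0$ to both the mean and the second moment, so it can be ignored in the integrals, but one must verify that the density $f_0$ together with the point mass does integrate to $1$, which amounts to checking $\int_{-1}^0 f_0 + 2S^2/(1+S^2) = 1$. A secondary care is the sign convention: the asymmetric encoding $\nu_{\mathrm{poly}}=x_1-x_2$ is what forces $\mu_0(\nu_{\mathrm{poly}})$ to come out negative, exactly as in the stated expression $-\tfrac{1}{3}(1-S)(1+2S)/(1+S^2)$.
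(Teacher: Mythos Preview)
Your proposal is correct and follows essentially the same route as the paper: use Lemma~\ref{lem::dist_nuw} to obtain the conditional CDFs of $\nu_{\mathrm{poly}}$ by restricting to the appropriate sign and normalizing by the class prior, differentiate to get the conditional densities (your $f_0$ and $f_1$ coincide with the paper's integrands), and then compute the polynomial integrals for the first and second moments. Your explicit treatment of the atom at $0$ is a nice clarification that the paper leaves implicit, but it does not change the argument.
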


\begin{proof}[Proof of Theorem \ref{thm::muw}]
	By Lemma \ref{lem::dist_nuw}, we have
	\begin{align}
		\mathrm{P}_{\mathrm{poly}}(x_1-x_2 \leq x | y=0)
		&= \mathrm{P}(x_1 - x_2 \leq x | x_1 \leq x_2)
		\nonumber\\
		&= \mathrm{P}(x_1 - x_2 \leq \min(0,x)) / \mathrm{P}(x_1 - x_2 \leq 0)
		\nonumber\\
		&= \left\{
		\begin{aligned}
			&\Big[\frac{1}{2}[(1-S)x + 1]^2 - \frac{1}{2}S^2\Big] / [\frac{1}{2}(1+S^2)], &x \in [-1,0)
			\nonumber\\
			&1, &x \in [0,1]
		\end{aligned}
		\right.
		\nonumber\\
		&= \left\{
		\begin{aligned}
			&\Big[[(1-S)x + 1]^2 - S^2\Big] / (1+S^2), &x \in [-1,0)
			\nonumber\\
			&1, &x \in [0,1]
		\end{aligned}
		\right.
	\end{align}
	and
	\begin{align}
		&\mathrm{P}_{\mathrm{poly}}(x_1-x_2 \leq x | y=1)
		\nonumber\\
		&= \mathrm{P}(x_1-x_2 \leq x | x_1 > x_2)
		\nonumber\\
		&= \mathrm{P}(0 < x_1-x_2 \leq x) / \mathrm{P}(x_1 - x_2 > 0)
		\nonumber\\
		&= \left\{
		\begin{aligned}
			&0, &x \in [-1,0]
			\nonumber\\
			&[\mathrm{P}(x_1-x_2 \leq x) - \mathrm{P}(x_1-x_2 \leq 0)] / [1 - \mathrm{P}(x_1 - x_2 \leq 0)]
			, &x \in (0,1]
		\end{aligned}
		\right.
		\nonumber\\
		&= \left\{
		\begin{aligned}
			&0, &x \in [-1,0]
			\nonumber\\
			&\Big[-\frac{1}{2} [1-(1-S)x]^2 + 1 + \frac{1}{2}S^2 - \frac{1}{2}(1+S^2)\Big] / [1 - \frac{1}{2}(1+S^2)]
			, &x \in (0,1]
		\end{aligned}
		\right.
		\nonumber\\
		&= \left\{
		\begin{aligned}
			&0, &x \in [-1,0]
			\nonumber\\
			&\Big[1 - [1-(1-S)x]^2 \Big] / (1 - S^2), &x \in (0,1]
		\end{aligned}
		\right.
	\end{align}

Then we have
\begin{align}
	\mu_0(\nu_{\mathrm{poly}}) 
	&= \int_{x\in [-1,0)} x \cdot 2(1-S)[(1-S)x+1]/(1+S^2) \, dx
	\nonumber\\
	&= \frac{2(1-S)}{1+S^2} \Big[\frac{1}{3}(1-S)-\frac{1}{2}\Big]
	\nonumber\\
	&= -\frac{1}{3} \frac{(1-S)(1+2S)}{1+S^2},
\end{align}
\begin{align}
	\mu_1(\nu_{\mathrm{poly}}) 
	&= \int_{x\in (0,1]} x \cdot 2(1-S)[1-(1-S)x]/(1-S^2) \, dx
	\nonumber\\
	&= \frac{2}{1+S} \Big[\frac{1}{2} - \frac{1}{3}(1-S)\Big]
	\nonumber\\
	&= \frac{1}{3} \frac{1+2S}{1+S},
\end{align}
\begin{align}
	\mu_1(\nu_{\mathrm{poly}}) - \mu_0(\nu_{\mathrm{poly}}) 
	&= \frac{2}{3}\frac{1+2S}{(1+S)(1+S^2)},
\end{align}
\begin{align}
	\sigma^2_0(\nu_{\mathrm{poly}}) 
	&= \int_{x\in [-1,0)} x^2 \cdot 2(1-S)[(1-S)x+1]/(1+S^2) \, dx
	- \mu_0(\nu_{\mathrm{poly}})^2
	\nonumber\\
	&= \frac{2(1-S)}{1+S^2} \Big[-\frac{1}{4}(1-S)+\frac{1}{3}\Big]
	- \mu_0(\nu_{\mathrm{poly}})^2
	\nonumber\\
	&= \frac{1}{6} \frac{(1-S)(1+3S)}{1+S^2}
	- \mu_0(\nu_{\mathrm{poly}})^2,
\end{align}
and
\begin{align}
	\sigma^2_1(\nu_{\mathrm{poly}}) 
	&= \int_{x\in (0,1]} x^2 \cdot 2(1-S)[1-(1-S)x]/(1-S^2) \, dx
	- \mu_1(\nu_{\mathrm{poly}})^2
	\nonumber\\
	&= \frac{2}{1+S} \Big[\frac{1}{3} - \frac{1}{4}(1-S)\Big]
	- \mu_1(\nu_{\mathrm{poly}})^2
	\nonumber\\
	&= \frac{1}{6} \frac{1+3S}{1+S}
	- \mu_1(\nu_{\mathrm{poly}})^2.
\end{align}

\end{proof}

\subsubsection{Proof of Theorem \ref{thm::superposition}}

\begin{proof}[Proof of Theorem \ref{thm::superposition}]
	Following the toy model described in Section \ref{sec::toymodel}, we let $S=0.2$. Then by Theorem \ref{thm::muwo}, we have $\mu_0 (\nu_{\mathrm{mono}}) = 0.205$, $\mu_1 (\nu_{\mathrm{mono}}) = 0.611$, $\Delta \mu (\nu_{\mathrm{mono}}) = 0.406$, $\sigma_0(\nu_{\mathrm{mono}}) = 0.246$, $\sigma_1(\nu_{\mathrm{mono}}) = 0.266$, and $J(\nu_{\mathrm{mono}}) = 6.196$.
    By Theorem \ref{thm::muw}, we have $\mu_0 (\nu_{\mathrm{poly}}) = -0.359$, $\mu_1 (\nu_{\mathrm{poly}}) = 0.389$, $\Delta \mu (\nu_{\mathrm{poly}}) = 0.748$, $\sigma_0(\nu_{\mathrm{mono}}) = 0.276$, $\sigma_1(\nu_{\mathrm{poly}}) = 0.266$, and $J(\nu_{\mathrm{poly}}) = 10.164$.
    By comparing the above results, we complete the proof.
\end{proof}

\subsection{Proofs Related to Label Noise}

Following \cite{ghosh2017robust, ma2020normalized, wang2019symmetric}, we assume the noisy label $\tilde{y}$ is randomly flipped from the true labels to other classes. Under $\eta \in [0,\frac{K-1}{K})$, the noisy label distribution is 
\begin{align}
	\mathrm{P}(\tilde{y}=k | x)
	= \sum_{j=0,1}^K \mathrm{P}(\tilde{y}=k | y=j) \mathrm{P}(y=k | x),
\end{align}
where $\mathrm{P}(\tilde{y}=k | y=j) = 1-\eta$ if $j=k$, and otherwise $\mathrm{P}(\tilde{y}=k | y=j)=\eta$.

\subsubsection{Influence of Label Noise on Conditional Mean and Variance}

\begin{lemma}[Conditional Distributions]\label{lem::cond_dist}
	For noise rate $\eta \in [0,1/2)$ and sparsity $S \in [0,1]$, we have conditional distributions 
	\begin{align}
		\mathrm{P}(\nu | \tilde{y} = 0)
		&= \frac{(1-\eta)(1+S^2) \mathrm{P}(\nu | y=0) + \eta (1-S^2)\mathrm{P}(\nu | y=1)}{(1-\eta)(1+S^2) + \eta (1-S^2)},
	\end{align}
	and
	\begin{align}
		\mathrm{P}(\nu | \tilde{y} = 1)
		&= \frac{\eta (1+S^2) \mathrm{P}(\nu | y=0) + (1-\eta) (1-S^2)\mathrm{P}(\nu | y=1)}{\eta (1+S^2) + (1-\eta) (1-S^2)}.
	\end{align}
\end{lemma}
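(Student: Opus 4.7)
The plan is to derive both conditional densities by a direct application of Bayes' rule to the symmetric label-noise model, using the class priors implied by the toy-model distribution of $(x_1, x_2)$. Write
\begin{equation}
\mathrm{P}(\nu | \tilde{y}=k) \;=\; \frac{\mathrm{P}(\tilde{y}=k | \nu)\,\mathrm{P}(\nu)}{\mathrm{P}(\tilde{y}=k)},
\end{equation}
and expand $\mathrm{P}(\tilde{y}=k | \nu) = \sum_{j\in\{0,1\}} \mathrm{P}(\tilde{y}=k | y=j)\,\mathrm{P}(y=j | \nu)$ using the symmetric-noise assumption $\mathrm{P}(\tilde{y}=k | y=j) = (1-\eta)\mathbbm{1}\{j=k\} + \eta\,\mathbbm{1}\{j\neq k\}$. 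Multiplying numerator and denominator by $\mathrm{P}(\nu)$ converts the posterior expression into a mixture in terms of the class-conditional densities $\mathrm{P}(\nu | y=0)$, $\mathrm{P}(\nu | y=1)$ and the class priors $\mathrm{P}(y=0)$, $\mathrm{P}(y=1)$.

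Next I would compute the clean-label priors from the toy-model specification. Since $y = \arg\max_i x_i$ for the binary case, we have $\mathrm{P}(y=0) = \mathrm{P}(x_1 \leq x_2)$ and $\mathrm{P}(y=1) = \mathrm{P}(x_1 > x_2)$. These quantities already appear implicitly in Lemma~\ref{lem::dist_nuw}: evaluating $\mathrm{P}(x_1 - x_2 \leq 0) = \tfrac{1}{2}(1+S^2)$ gives
\begin{equation}
\mathrm{P}(y=0) \;=\; \tfrac{1}{2}(1+S^2), \qquad \mathrm{P}(y=1) \;=\; \tfrac{1}{2}(1-S^2).
\end{equation}
(This asymmetry between the two classes comes from the fact that $x_1 = x_2 = 0$ with positive probability $S^2$ under the sparse prior, and we assign ties to $y=0$.) Plugging these priors into the Bayes expansion above, the factors of $1/2$ cancel between numerator and denominator, yielding exactly the two displayed formulas.

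The calculation itself is elementary, so the only genuine care required is in the treatment of the class priors: one must not assume $\mathrm{P}(y=0) = \mathrm{P}(y=1) = 1/2$, since the sparsity parameter $S$ breaks that symmetry through the atom at $x_1 = x_2 = 0$. Once the priors $(1\pm S^2)/2$ are correctly identified, everything else is a one-line substitution, and the constraint $\eta \in [0, 1/2)$ is used only to ensure the normalizing constant $\mathrm{P}(\tilde{y}=k)$ is strictly positive so the conditional densities are well-defined. The same derivation handles the $\tilde{y}=1$ case by swapping the roles of $(1-\eta)$ and $\eta$ in the mixture weights.
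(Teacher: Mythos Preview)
Your proposal is correct and follows essentially the same route as the paper: Bayes' rule plus the symmetric-noise transition probabilities, combined with the class priors $\mathrm{P}(y=0)=\tfrac12(1+S^2)$ and $\mathrm{P}(y=1)=\tfrac12(1-S^2)$. The only cosmetic difference is that you read off $\mathrm{P}(x_1\le x_2)$ by evaluating Lemma~\ref{lem::dist_nuw} at $x=0$, whereas the paper recomputes it directly; your aside that $\eta<1/2$ is what guarantees positivity of $\mathrm{P}(\tilde y=k)$ is not quite right (the normalizer is a convex combination of positive priors and stays positive for all $\eta\in[0,1]$ when $S<1$), but this does not affect the argument.
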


\begin{proof}[Proof of Lemma \ref{lem::cond_dist}]
	We first calculate the class conditional distributions.
	\begin{align}\label{eq::p0}
		\mathrm{P}(\nu|\tilde{y}=0)
		&= \mathrm{P}(\tilde{y}=0 | \nu) \mathrm{P}(\nu) / \mathrm{P}(\tilde{y}=0)
		\nonumber\\
		&= \frac{\sum_{j=0,1} \mathrm{P}(\tilde{y}=0|y=j) \mathrm{P}(y=j | \nu)\mathrm{P}(\nu)}{\sum_{j=0,1} \mathrm{P}(\tilde{y}=0|y=j) \mathrm{P}(y=j)}
		\nonumber\\
		&= \frac{\sum_{j=0,1} \mathrm{P}(\tilde{y}=0|y=j) \mathrm{P}(\nu | y=j)\mathrm{P}(y=j)}{\sum_{j=0,1} \mathrm{P}(\tilde{y}=0|y=j) \mathrm{P}(y=j)}
		\nonumber\\
		&= \frac{(1-\eta)\mathrm{P}(\nu | y=0)\mathrm{P}(y=0) + \eta \mathrm{P}(\nu | y=1)\mathrm{P}(y=1)}{(1-\eta) \mathrm{P}(y=0) + \eta \mathrm{P}(y=1)}.
	\end{align}
	
	\begin{align}\label{eq::p1}
		\mathrm{P}(\nu|\tilde{y}=1)
		&= \mathrm{P}(\tilde{y}=1 | \nu) \mathrm{P}(\nu) / \mathrm{P}(\tilde{y}=1)
		\nonumber\\
		&= \frac{\sum_{j=0,1} \mathrm{P}(\tilde{y}=1|y=j) \mathrm{P}(y=j | \nu)\mathrm{P}(\nu)}{\sum_{j=0,1} \mathrm{P}(\tilde{y}=1|y=j) \mathrm{P}(y=j)}
		\nonumber\\
		&= \frac{\sum_{j=0,1} \mathrm{P}(\tilde{y}=1|y=j) \mathrm{P}(\nu | y=j)\mathrm{P}(y=j)}{\sum_{j=0,1} \mathrm{P}(\tilde{y}=1|y=j) \mathrm{P}(y=j)}
		\nonumber\\
		&= \frac{\eta \mathrm{P}(\nu | y=0)\mathrm{P}(y=0) + (1-\eta) \mathrm{P}(\nu | y=1)\mathrm{P}(y=1)}{\eta \mathrm{P}(y=0) + (1-\eta) \mathrm{P}(y=1)}.
	\end{align}
	
	Recall that $x_1, x_2= 0$ with probability $S$, and $x_1, x_2 \sim \mathcal{U}(0,1]$ with probability $1-S$. 
	Because $x_1$ and $x_2$ are independently and identically distributed and $\mathrm{P}(x_1 = x_2 | x_1, x_2 = 0)$, we have $\mathrm{P}(x_1 \leq x_2 | x_1, x_2 > 0) = \mathrm{P}(x_2 \leq x_1 | x_1, x_2 > 0) = 1/2$,
	and therefore
	\begin{align}
		\mathbb{P}(y=0)
		&= \mathbb{P}(x_1 \leq x_2)
		\nonumber\\
		&= \mathrm{P}(x_1=0) + \mathrm{P}(x_1 > 0) \mathrm{P}(x_2 > 0) \mathrm{P}(x_1 \leq x_2 | x_1, x_2 > 0)
		\nonumber\\
		&= S+\frac{1}{2}(1-S)^2
		= \frac{1}{2}(1+S^2).
	\end{align}
	
	Then $\mathrm{P}(y=1) = 1 - \mathrm{P}(y=0) = \frac{1}{2}(1-S^2)$, and correspondingly we have
	\begin{align}
		\mathrm{P}(\nu | \tilde{y} = 0)
		&= \frac{(1-\eta)(1+S^2) \mathrm{P}(\nu | y=0) + \eta (1-S^2)\mathrm{P}(\nu | y=1)}{(1-\eta)(1+S^2) + \eta (1-S^2)},
	\end{align}
	and
	\begin{align}
		\mathrm{P}(\nu | \tilde{y} = 1)
		&= \frac{\eta (1+S^2) \mathrm{P}(\nu | y=0) + (1-\eta) (1-S^2)\mathrm{P}(\nu | y=1)}{\eta (1+S^2) + (1-\eta) (1-S^2)}.
	\end{align}
\end{proof}

\begin{theorem}[Influence of label noise on inter-class distance]\label{thm::munr}
	For noise rate $\eta \in [0,\frac{1}{2})$, 
	\begin{align}
		\Delta \tilde{\mu}(\nu)
		&= \frac{(1-2\eta)(1+S^2)(1-S^2)}{[1+(1-2\eta)S^2][1-(1-2\eta)S^2]} 
		\Delta \mu(\nu).
	\end{align}
\end{theorem}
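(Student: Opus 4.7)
The plan is to lift the conditional-distribution identity in Lemma~\ref{lem::cond_dist} to conditional means via linearity of expectation, and then simplify the resulting difference. Since $\mathbb{P}(\nu \mid \tilde y=k)$ is an explicit convex combination of $\mathbb{P}(\nu \mid y=0)$ and $\mathbb{P}(\nu \mid y=1)$, integrating $\nu$ against both sides immediately gives
\begin{equation*}
  \tilde\mu_0(\nu) = \frac{(1-\eta)(1+S^2)\,\mu_0(\nu) + \eta(1-S^2)\,\mu_1(\nu)}{(1-\eta)(1+S^2) + \eta(1-S^2)},\qquad
  \tilde\mu_1(\nu) = \frac{\eta(1+S^2)\,\mu_0(\nu) + (1-\eta)(1-S^2)\,\mu_1(\nu)}{\eta(1+S^2) + (1-\eta)(1-S^2)}.
\end{equation*}

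\noindent\textbf{Simplification.} To keep the algebra readable, I would set $a = 1+S^2$, $b = 1-S^2$, $p = 1-\eta$, $q = \eta$, and first collapse the two denominators. A direct expansion gives
\begin{equation*}
  pa + qb \;=\; 1 + (1-2\eta)S^2, \qquad qa + pb \;=\; 1 - (1-2\eta)S^2,
\end{equation*}
so the product of the denominators equals $[1+(1-2\eta)S^2][1-(1-2\eta)S^2]$, which is exactly the denominator appearing in the claim. Next, putting $\tilde\mu_1 - \tilde\mu_0$ over this common denominator, the numerator is $(qa\mu_0 + pb\mu_1)(pa+qb) - (pa\mu_0 + qb\mu_1)(qa+pb)$. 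The $\mu_0$ contribution is $(q^2 - p^2)ab\,\mu_0$ and the $\mu_1$ contribution is $(p^2 - q^2)ab\,\mu_1$, so the numerator collapses to $(p^2-q^2)\,ab\,(\mu_1 - \mu_0) = (1-2\eta)(1+S^2)(1-S^2)(\mu_1 - \mu_0)$, using $p+q = 1$ and $p-q = 1-2\eta$. Taking absolute values yields the advertised identity.

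\noindent\textbf{Anticipated obstacles.} There is no real conceptual difficulty; the only risk is bookkeeping errors. The one subtlety worth flagging is that for $\eta \in [0,1/2)$ the factor $1-2\eta$ is nonnegative, so the absolute value in $\Delta\tilde\mu$ passes inside cleanly and matches the sign of $\Delta\mu$. A sanity check that I would include is the two endpoints: at $\eta = 0$ the prefactor reduces to $1$ (no noise, $\Delta\tilde\mu = \Delta\mu$), and at $\eta \to 1/2$ it tends to $0$ (labels become independent of $x$, so the classes become indistinguishable in mean). These match intuition and confirm the formula.
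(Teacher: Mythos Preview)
Your proposal is correct and follows essentially the same approach as the paper: both invoke Lemma~\ref{lem::cond_dist}, pass to conditional means by linearity, and subtract $\tilde\mu_1-\tilde\mu_0$ to extract the common factor in front of $\mu_1-\mu_0$. Your substitutions $a,b,p,q$ make the bookkeeping a bit tidier than the paper's direct expansion, but the argument is identical in substance.
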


\begin{proof}[Proof of Theorem \ref{thm::munr}]
	By Lemma \ref{lem::cond_dist}, the conditional means of $\nu$ has the following forms.
	\begin{align}\label{eq::mu1}
		\tilde{\mu}_0(\nu) &:= \mathbb{E}(\nu | \tilde{y} = 0)
		\nonumber\\
		&= \int_{\nu} \nu \, d\mathrm{P}(\nu|\tilde{y}=0)
		\nonumber\\
		&= \int_{\nu} \nu \frac{(1-\eta)(1+S^2)}{(1-\eta)(1+S^2) + \eta (1-S^2)} \, d\mathrm{P}(\nu | y=0)
		\nonumber\\
		&+ \int_{\nu} \nu \frac{\eta (1-S^2)}{(1-\eta)(1+S^2) + \eta (1-S^2)} \,d\mathrm{P}(\nu | y=1).
	\end{align}
	
	\begin{align}\label{eq::mu2}
		\tilde{\mu}_1(\nu) &:= \mathbb{E}(\nu | \tilde{y} = 1)
		\nonumber\\
		&= \int_{\nu} \nu \, d\mathrm{P}(\nu|\tilde{y}=1)
		\nonumber\\
		&= \int_{\nu} \nu \frac{\eta (1+S^2)}{\eta (1+S^2) + (1-\eta) (1-S^2)} \, d\mathrm{P}(\nu | y=0)
		\nonumber\\
		&+ \int_{\nu} \nu \frac{(1-\eta) (1-S^2)}{\eta (1+S^2) + (1-\eta) (1-S^2)} \,d\mathrm{P}(\nu | y=1).
	\end{align}
	
	Then we have
	\begin{align}
		&\tilde{\mu}_1(\nu) - \tilde{\mu}_0(\nu)
		\nonumber\\
		&= \int_{\nu} \nu \Big[\frac{\eta (1+S^2)}{\eta (1+S^2) + (1-\eta) (1-S^2)} 
		- \frac{(1-\eta)(1+S^2)}{(1-\eta)(1+S^2) + \eta (1-S^2)}\Big] \, d\mathrm{P}(\nu | y=0)
		\nonumber\\
		&+ \int_{\nu} \nu \Big[\frac{(1-\eta) (1-S^2)}{\eta (1+S^2) + (1-\eta) (1-S^2)}
		- \frac{\eta (1-S^2)}{(1-\eta)(1+S^2) + \eta (1-S^2)}\Big] \,d\mathrm{P}(\nu | y=1)
		\nonumber\\
		&= \frac{(1-2\eta)(1+S^2)(1-S^2)}{[1+(1-2\eta)S^2][1-(1-2\eta)S^2]} \Big[\int_{\nu} \nu  \,d\mathrm{P}(\nu | y=1) - \int_{\nu} \nu  \,d\mathrm{P}(\nu | y=0)\Big]
		\nonumber\\
		&= \frac{(1-2\eta)(1+S^2)(1-S^2)}{[1+(1-2\eta)S^2][1-(1-2\eta)S^2]} [\mu_1(\nu) - \mu_0(\nu)].
	\end{align}
\end{proof}

\begin{theorem}[Influence of label noise on intra-class variance]\label{thm::signr}
	For $i=0,1$ and noise rate $\eta \in [0,\frac{1}{2})$,
	\begin{align*}
		\tilde{\sigma}_i^2(\nu) 
		&= c_{i,0} \sigma^2_0(\nu) + c_{i,1} \sigma^2_1(\nu) 
		+c_{i,0} \mu_0(\nu)^2 + c_{i,1} \mu_1(\nu)^2
		-[c_{i,0} \mu_0(\nu) + c_{i,1} \mu_1(\nu)]^2
	\end{align*}	
	where $c_{0,0} := \frac{(1-\eta)(1+S^2)}{1+(1-2\eta)S^2}$, 
	$c_{0,1} := \frac{\eta(1+S^2)}{1+(1-2\eta)S^2}$,
	$c_{1,0}=\frac{\eta(1+S^2)}{1-(1-2\eta)S^2}$, and 
	$c_{1,1} = \frac{(1-\eta)(1+S^2)}{1-(1-2\eta)S^2}$.
\end{theorem}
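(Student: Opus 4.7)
The plan is to derive $\tilde\sigma_i^2(\nu)$ by rewriting the noisy class-conditional distribution as a mixture of the clean class-conditional distributions and then applying the identity $\mathrm{Var}(X)=\mathbb{E}[X^2]-\mathbb{E}[X]^2$. Nothing new about the noise model is needed; everything follows once Lemma~\ref{lem::cond_dist} is invoked.

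First, I would read off from Lemma~\ref{lem::cond_dist} that for $i\in\{0,1\}$ the noisy conditional density is a convex combination
$$\mathrm{P}(\nu\mid\tilde y=i)\;=\;c_{i,0}\,\mathrm{P}(\nu\mid y=0)+c_{i,1}\,\mathrm{P}(\nu\mid y=1),$$
where $c_{i,j}$ are exactly the coefficients in the statement. This requires only the bookkeeping of denominators: $(1-\eta)(1+S^2)+\eta(1-S^2)=1+(1-2\eta)S^2$ and $\eta(1+S^2)+(1-\eta)(1-S^2)=1-(1-2\eta)S^2$. A small sanity check is that the weights sum to one (so that the mixture is a genuine probability density); this is immediate from the same two identities.

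Second, because expectation is linear over mixtures, I would compute the raw second moment under noise as
$$\mathbb{E}[\nu^2\mid\tilde y=i]\;=\;c_{i,0}\,\mathbb{E}[\nu^2\mid y=0]+c_{i,1}\,\mathbb{E}[\nu^2\mid y=1]\;=\;c_{i,0}(\sigma_0^2(\nu)+\mu_0(\nu)^2)+c_{i,1}(\sigma_1^2(\nu)+\mu_1(\nu)^2),$$
using $\mathbb{E}[\nu^2\mid y=j]=\sigma_j^2(\nu)+\mu_j(\nu)^2$. In parallel, the same mixture representation applied to the first moment gives $\tilde\mu_i(\nu)=c_{i,0}\mu_0(\nu)+c_{i,1}\mu_1(\nu)$, which matches (and in fact sharpens) the computation of $\Delta\tilde\mu$ carried out in Theorem~\ref{thm::munr}. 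Subtracting $\tilde\mu_i(\nu)^2$ from $\mathbb{E}[\nu^2\mid\tilde y=i]$ and regrouping terms then yields the claimed formula verbatim.

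The main obstacle is essentially bookkeeping rather than analysis: there is no conceptual difficulty, so the risk is only arithmetic in the mixture weights and in the algebra of the denominators. One subtlety worth flagging is that the expression $c_{i,0}\mu_0^2+c_{i,1}\mu_1^2-[c_{i,0}\mu_0+c_{i,1}\mu_1]^2$ is deliberately left unsimplified in the statement, although (since $c_{i,0}+c_{i,1}=1$) it collapses to $c_{i,0}c_{i,1}\,(\mu_0-\mu_1)^2=c_{i,0}c_{i,1}\,\Delta\mu(\nu)^2$; I would record this reformulation because it is the form that will make the downstream comparison $\tilde J(\nu_{\mathrm{poly}})$ versus $\tilde J(\nu_{\mathrm{mono}})$ in Theorem~\ref{thm::Jnr} tractable, by coupling naturally to the inter-class distance scaling already obtained in Theorem~\ref{thm::munr}.
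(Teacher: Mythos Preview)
Your approach is identical to the paper's: invoke Lemma~\ref{lem::cond_dist} to express $\mathrm{P}(\nu\mid\tilde y=i)$ as a two-component mixture, compute the second raw moment by linearity, and subtract the squared mixture mean. One remark worth making is that your sanity check $c_{i,0}+c_{i,1}=1$ actually exposes a typo in the stated $c_{0,1}$ and $c_{1,1}$ (their numerators should carry $(1-S^2)$, not $(1+S^2)$, as the paper's own intermediate line $\frac{\eta(1-S^2)}{(1-\eta)(1+S^2)+\eta(1-S^2)}$ confirms); with the corrected weights your closing simplification to $c_{i,0}c_{i,1}\,\Delta\mu(\nu)^2$ is valid and is indeed the convenient form for the comparison in Theorem~\ref{thm::Jnr}.
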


\begin{proof}[Proof of Theorem \ref{thm::signr}]
	By Lemma \ref{lem::cond_dist}, the conditional variances of $\nu$ has the following forms.
	\begin{align}
		\tilde{\sigma}_0^2(\nu) &:= \mathbb{E}(\nu^2 | \tilde{y} = 0) - \tilde{\mu}_0(\nu)^2
		\nonumber\\
		&= \int_{\nu} \nu^2 \, d\mathrm{P}(\nu|\tilde{y}=0) - \tilde{\mu}_0(\nu)^2
		\nonumber\\
		&= \int_{\nu} \nu^2 \frac{(1-\eta)(1+S^2)}{(1-\eta)(1+S^2) + \eta (1-S^2)} \, d\mathrm{P}(\nu | y=0)
		\nonumber\\
		&+ \int_{\nu} \nu^2 \frac{\eta (1-S^2)}{(1-\eta)(1+S^2) + \eta (1-S^2)} \,d\mathrm{P}(\nu | y=1)
		- \tilde{\mu}_0(\nu)^2
		\nonumber\\
		&= \frac{(1-\eta)(1+S^2)}{1+(1-2\eta)S^2} [\sigma^2_0(\nu) + \mu_0(\nu)^2]
		+ \frac{\eta(1+S^2)}{1+(1-2\eta)S^2} [\sigma^2_1(\nu) + \mu_1(\nu)^2]
		\nonumber\\
		&- \Big[\frac{(1-\eta)(1+S^2)}{1+(1-2\eta)S^2}\mu_0(\nu) + \frac{\eta(1+S^2)}{1+(1-2\eta)S^2} \mu_1(\nu)\Big]^2
		\nonumber\\
		&:= c_0 \sigma^2_0(\nu) + c_1 \sigma^2_1(\nu) + c_0 \mu_0(\nu)^2 + c_1 \mu_1(\nu)^2 - [c_0\mu_0(\nu) + c_1\mu_1(\nu)]^2,
	\end{align}
	where $c_0 := \frac{(1-\eta)(1+S^2)}{1+(1-2\eta)S^2}$ and $c_1 := \frac{\eta(1+S^2)}{1+(1-2\eta)S^2}$.
	
	\begin{align}
		\tilde{\sigma}_1^2(\nu) &:= \mathbb{E}(\nu^2 | \tilde{y} = 1) - \tilde{\mu}_1(\nu)^2
		\nonumber\\
		&= \int_{\nu} \nu^2 \, d\mathrm{P}(\nu|\tilde{y}=0) - \tilde{\mu}_0(\nu)^2
		\nonumber\\
		&= \int_{\nu} \nu^2 \frac{\eta(1+S^2)}{\eta(1+S^2) + (1-\eta) (1-S^2)} \, d\mathrm{P}(\nu | y=0)
		\nonumber\\
		&+ \int_{\nu} \nu^2 \frac{(1-\eta) (1-S^2)}{\eta(1+S^2) + (1-\eta) (1-S^2)} \,d\mathrm{P}(\nu | y=1)
		- \tilde{\mu}_1(\nu)^2
		\nonumber\\
		&= \frac{\eta(1+S^2)}{1-(1-2\eta)S^2} [\sigma^2_0(\nu) + \mu_0(\nu)^2]
		+ \frac{(1-\eta)(1+S^2)}{1-(1-2\eta)S^2} [\sigma^2_1(\nu) + \mu_1(\nu)^2]
		\nonumber\\
		&- \Big[\frac{\eta(1+S^2)}{1-(1-2\eta)S^2}\mu_0(\nu) + \frac{(1-\eta)(1+S^2)}{1-(1-2\eta)S^2} \mu_1(\nu)\Big]^2
		\nonumber\\
		&:= c_0' \sigma^2_0(\nu) + c_1' \sigma^2_1(\nu) + c_0' \mu_0(\nu)^2 + c_1' \mu_1(\nu)^2 - [c_0'\mu_0(\nu) + c_1'\mu_1(\nu)]^2,
	\end{align}
	where $c_0'=\frac{\eta(1+S^2)}{1-(1-2\eta)S^2}$ and $c_1' = \frac{(1-\eta)(1+S^2)}{1-(1-2\eta)S^2}$.
\end{proof}

\subsubsection{Linear Separability of monosemantic \& polysemantic representations under label noise}\label{sec::proof_nr}

\begin{proof}[Proof of Theorem \ref{thm::Jnr}]

By definition, we have
\begin{align}
    \frac{\tilde{J}(\nu_{\mathrm{mono}})/J(\nu_{\mathrm{mono}})}{\tilde{J}(\nu_{\mathrm{poly}})/J(\nu_{\mathrm{poly}})}
    &= \frac{[\Delta \tilde{\mu}(\nu_{\mathrm{mono}})/(\tilde{\sigma}_0(\nu_{\mathrm{mono}}) \tilde{\sigma}_1(\nu_{\mathrm{mono}}))]/[\Delta \mu(\nu_{\mathrm{mono}})/(\sigma_0(\nu_{\mathrm{mono}}) \sigma_1(\nu_{\mathrm{mono}}))]}{[\Delta \tilde{\mu}(\nu_{\mathrm{poly}})/(\tilde{\sigma}_0(\nu_{\mathrm{poly}}) \tilde{\sigma}_1(\nu_{\mathrm{poly}}))]/[\Delta \mu(\nu_{\mathrm{poly}})/(\sigma_0(\nu_{\mathrm{poly}}) \sigma_1(\nu_{\mathrm{poly}}))]}.
\end{align}
By Theorem \ref{thm::munr} we have $\Delta \tilde{\mu}(\nu_{\mathrm{mono}})/\Delta \mu(\nu_{\mathrm{mono}}) = \Delta \tilde{\mu}(\nu_{\mathrm{poly}})/\Delta \mu(\nu_{\mathrm{poly}})$ and $\sigma_1(\nu_{\mathrm{mono}})=\sigma_1(\nu_{\mathrm{poly}})$, and thus 
\begin{align}\label{eq::ratio}
    \frac{\tilde{J}(\nu_{\mathrm{mono}})/J(\nu_{\mathrm{mono}})}{\tilde{J}(\nu_{\mathrm{poly}})/J(\nu_{\mathrm{poly}})}
    &= \frac{\tilde{\sigma}_0(\nu_{\mathrm{poly}})}{\tilde{\sigma}_0(\nu_{\mathrm{mono}})} \cdot \frac{\tilde{\sigma}_1(\nu_{\mathrm{poly}})}{\tilde{\sigma}_1(\nu_{\mathrm{mono}})} \cdot \frac{\sigma_0(\nu_{\mathrm{mono}})}{\sigma_0(\nu_{\mathrm{poly}})}.
\end{align}

By Theorems \ref{thm::muwo} and \ref{thm::signr}, we have
\begin{align}\label{eq::tildesig0wo}
    \tilde{\sigma}^2_0(\nu_{\mathrm{mono}}) 
    &= \frac{1.04 (1-\eta)}{1.04 - 0.08\eta}(0.246^2 + 0.205^2)
    + \frac{1.04 \eta}{1.04 - 0.08\eta}(0.266^2 + 0.611^2)
    \nonumber\\
    &- \Big[\frac{1.04 (1-\eta)}{1.04 - 0.08\eta} 0.205 + \frac{1.04 \eta}{1.04 - 0.08\eta} 0.611\Big]^2,
\end{align}
\begin{align}\label{eq::tildesig1wo}
    \tilde{\sigma}^2_1(\nu_{\mathrm{mono}}) 
    &= \frac{1.04 \eta}{0.96 + 0.08\eta}(0.246^2 + 0.205^2)
    + \frac{1.04 (1-\eta)}{0.96 + 0.08\eta}(0.266^2 + 0.611^2)
    \nonumber\\
    &- \Big[\frac{1.04 \eta}{0.96 + 0.08\eta} 0.205 + \frac{1.04 (1-\eta)}{0.96 + 0.08\eta} 0.611\Big]^2,
\end{align}
\begin{align}\label{eq::tildesig0w}
    \tilde{\sigma}^2_0(\nu_{\mathrm{poly}}) 
    &= \frac{1.04 (1-\eta)}{1.04 - 0.08\eta}(0.276^2 + (-0.359)^2)
    + \frac{1.04 \eta}{1.04 - 0.08\eta}(0.266^2 + 0.389^2)
    \nonumber\\
    &- \Big[\frac{1.04 (1-\eta)}{1.04 - 0.08\eta} (-0.359)) + \frac{1.04 \eta}{1.04 - 0.08\eta} 0.389\Big]^2,
\end{align}
\begin{align}\label{eq::tildesig1w}
    \tilde{\sigma}^2_1(\nu_{\mathrm{poly}}) 
    &= \frac{1.04 \eta}{0.96 + 0.08\eta}(0.276^2 + (-0.359)^2)
    + \frac{1.04 (1-\eta)}{0.96 + 0.08\eta}(0.266^2 + 0.389^2)
    \nonumber\\
    &- \Big[\frac{1.04 \eta}{0.96 + 0.08\eta} (-0.359) + \frac{1.04 (1-\eta)}{0.96 + 0.08\eta} 0.389\Big]^2.
\end{align}

Then plugging \eqref{eq::tildesig0wo}, \eqref{eq::tildesig1wo}, \eqref{eq::tildesig0w}, and \eqref{eq::tildesig1w} into \eqref{eq::ratio}, we have $\frac{\tilde{J}(\nu_{\mathrm{mono}})/J(\nu_{\mathrm{mono}})}{\tilde{J}(\nu_{\mathrm{poly}})/J(\nu_{\mathrm{poly}})} \geq 1$.

Further, by Theorems \ref{thm::muwo} and \ref{thm::munr}, we have
\begin{align}
    \Delta \tilde{\mu} (\nu_{\mathrm{mono}}) = \frac{1.04 \times 0.96 (1-2\eta)}{(1.04-0.08\eta)(0.96+0.08\eta)} \times 0.406,
\end{align}
\begin{align}
    \Delta \tilde{\mu} (\nu_{\mathrm{poly}}) = \frac{1.04 \times 0.96 (1-2\eta)}{(1.04-0.08\eta)(0.96+0.08\eta)} \times 0.748.
\end{align}

Plugging them into the definition of $\tilde{J}(\nu_{\mathrm{mono}})$ and $\tilde{J}(\nu_{\mathrm{poly}})$, we have
$\tilde{J}(\nu_{\mathrm{mono}}) < \tilde{J}(\nu_{\mathrm{poly}})$ when $\eta < 0.25$ and $\tilde{J}(\nu_{\mathrm{mono}}) > \tilde{J}(\nu_{\mathrm{poly}})$ when $\eta > 0.25$.

\end{proof}

\subsection{Proofs Related to Input Noise}
\label{sec:theory input noise}
Following the settings in Section \ref{sec::toymodel}, we investigate the influence of Gaussian noise $\varepsilon_i \sim \mathcal{N}(0,1)$, i.i.d. $i=1,2$, on the input data $x=(x_1, x_2)$, where $\varepsilon_i \perp x$. Given noise strength $\lambda > 0$, we denote the noisy input data as $\tilde{x}=(x_1 + \lambda \varepsilon_1, x_2 + \lambda \varepsilon_2)$. Then the learned monosemantic and polysemantic representations are $\nu_{\mathrm{mono}} = x_1 + \lambda \varepsilon_1$ and $\nu_{\mathrm{poly}} = (x_1 - x_2) + \lambda (\varepsilon_1 - \varepsilon_2)$.
Next, we derive the influence of noise strength on the conditional means and variances, respectively. 

\begin{theorem}[Influence of Gaussian noise on inter-class distance]\label{thm::gaussian_mu}
    Given noise strength $\lambda>0$, for both mono- and poly-semantic representations, we have
    \begin{align}
        \Delta\tilde{\mu}(\nu) = \Delta\mu(\nu).
    \end{align}
\end{theorem}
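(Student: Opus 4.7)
The plan is a direct conditional-expectation calculation exploiting the independence $\varepsilon_i \perp x$ and the fact that $\varepsilon_i$ has mean zero. Since the label $y$ is a deterministic function of the clean input $x$ (namely $y = \mathbf{1}\{x_1 > x_2\}$) and the noise $\varepsilon_1, \varepsilon_2$ are independent of $x$, they are in particular independent of $y$. This is the only structural fact needed, so no appeal to the conditional densities computed in Lemma~\ref{lem::dist_nuw} or Theorems~\ref{thm::muwo}, \ref{thm::muw} is required.

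Concretely, I would carry out the argument in two short steps. First, for the monosemantic case $\nu_{\mathrm{mono}} = x_1 + \lambda \varepsilon_1$, by linearity of conditional expectation and independence,
\begin{equation}
\tilde{\mu}_i(\nu_{\mathrm{mono}}) = \mathbb{E}[x_1 \mid y=i] + \lambda\,\mathbb{E}[\varepsilon_1 \mid y=i] = \mu_i(\nu_{\mathrm{mono}}) + \lambda\,\mathbb{E}[\varepsilon_1] = \mu_i(\nu_{\mathrm{mono}}),
\end{equation}
for $i = 0, 1$. Second, for the polysemantic case $\nu_{\mathrm{poly}} = (x_1 - x_2) + \lambda(\varepsilon_1 - \varepsilon_2)$, the same reasoning applied to both noise coordinates gives $\tilde{\mu}_i(\nu_{\mathrm{poly}}) = \mu_i(\nu_{\mathrm{poly}})$. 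Subtracting the $i=0$ and $i=1$ expressions yields $\Delta\tilde{\mu}(\nu) = \Delta\mu(\nu)$ in both cases, which is the claim.

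There is essentially no obstacle here: the statement is a consequence of zero-mean additive noise that is independent of the label. The only thing worth flagging in the write-up is to explicitly justify $\mathbb{E}[\varepsilon_i \mid y] = \mathbb{E}[\varepsilon_i] = 0$, i.e.\ to point out that $\varepsilon_i \perp x$ implies $\varepsilon_i \perp y$ since $y = y(x)$. The real work for the input-noise analysis will instead live in the companion result for the conditional variances (where noise inflates $\sigma_i^2$ by $\lambda^2$ in the monosemantic case and by $2\lambda^2$ in the polysemantic case), which then drives the separability ratio $\tilde{J}(\nu_{\mathrm{poly}})/\tilde{J}(\nu_{\mathrm{mono}})$ to decrease with $\lambda$ analogously to Theorem~\ref{thm::Jnr}; the mean statement proved here is the easy half and isolates the fact that all of the input-noise degradation of linear separability comes from variance inflation, not from mean shifts.
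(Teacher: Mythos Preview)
Your proposal is correct and matches the paper's own proof essentially line for line: the paper also computes $\tilde{\mu}_i$ by linearity of conditional expectation, uses $\mathbb{E}[\varepsilon_j\mid y=i]=0$ from independence, and then subtracts to get $\Delta\tilde{\mu}(\nu)=\Delta\mu(\nu)$ in both the monosemantic and polysemantic cases. Your explicit remark that $\varepsilon_i\perp x$ implies $\varepsilon_i\perp y$ (since $y=y(x)$) is a welcome clarification that the paper leaves implicit.
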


\begin{proof}[Proof of Theorem \ref{thm::gaussian_mu}]
    For $\nu_{\mathrm{mono}}$ and $i=0,1$,
    \begin{align}
        \tilde{\mu}_i(\nu_{\mathrm{mono}}) &= \mathbb{E}(x_1 + \lambda \varepsilon_1|y=i)
        \nonumber\\
        &= \mathbb{E}(x_1|y=i)
        + \lambda \mathbb{E}(\varepsilon_1|y=i)
        \nonumber\\
        &= \mathbb{E}(x_1|y=i) + 0
        \nonumber\\
        &= \mu_i(\nu_{\mathrm{mono}}).
    \end{align}
    For $\nu_{\mathrm{poly}}$ and $i=0,1$,
    \begin{align}
        \tilde{\mu}_i(\nu_{\mathrm{poly}}) &= \mathbb{E}((x_1 - x_2) + \lambda (\varepsilon_1 - \varepsilon_2)|y=i)
        \nonumber\\
        &= \mathbb{E}(x_1-x_2|y=i)
        + \lambda [\mathbb{E}(\varepsilon_1|y=i) - \mathbb{E}(\varepsilon_2|y=i)]
        \nonumber\\
        &= \mathbb{E}(x_1-x_2|y=i) + 0
        \nonumber\\
        &= \mu_i(\nu_{\mathrm{poly}}).
    \end{align}
    Then for $\nu \in \{\nu_{\mathrm{mono}}, \nu_{\mathrm{poly}}\}$,
    \begin{align}
        \Delta\tilde{\mu}(\nu) = \tilde{\mu}_1(\nu) - \tilde{\mu}_0(\nu) = \mu_1(\nu) - \mu_0(\nu) = \Delta\mu(\nu).
    \end{align}
\end{proof}

\begin{theorem}[Influence of Gaussian noise on intra-class variance]\label{thm::gaussian_sig}
    For $i=0,1$ and noise strength $\lambda>0$, we have
    \begin{align}
        \tilde{\sigma}^2_i(\nu_{\mathrm{mono}}) = \sigma^2_i(\nu_{\mathrm{mono}}) + \lambda^2,
    \end{align}
    and
    \begin{align}
        \tilde{\sigma}^2_i(\nu_{\mathrm{poly}}) = \sigma^2_i(\nu_{\mathrm{poly}}) + 2\lambda^2.
    \end{align}
\end{theorem}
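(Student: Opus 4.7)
The plan is to exploit three facts: (i) the noises $\varepsilon_1, \varepsilon_2$ are independent of the input data $x$ and hence of the label $y$; (ii) they are standard normal, so each has unit variance; (iii) they are i.i.d., so $\mathrm{Var}(\varepsilon_1 - \varepsilon_2) = 2$. Theorem \ref{thm::gaussian_mu} has already established that adding centred noise leaves the class-conditional means unchanged, i.e.\ $\tilde{\mu}_i(\nu) = \mu_i(\nu)$, and this is the key fact that eliminates all cross terms once I expand the squared deviation.

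First I would write the conditional variance of $\nu_{\mathrm{mono}} = x_1 + \lambda\varepsilon_1$ as
\begin{align*}
\tilde{\sigma}^2_i(\nu_{\mathrm{mono}})
= \mathbb{E}\bigl[(x_1 + \lambda\varepsilon_1 - \mu_i(\nu_{\mathrm{mono}}))^2 \mid y=i\bigr],
\end{align*}
using $\tilde{\mu}_i = \mu_i$. Expanding the square gives three terms; the cross term is $2\lambda\,\mathbb{E}[(x_1 - \mu_i(\nu_{\mathrm{mono}}))\varepsilon_1 \mid y=i]$, which vanishes because $\varepsilon_1$ is independent of $(x,y)$ and $\mathbb{E}\varepsilon_1 = 0$. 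The remaining two terms are exactly $\sigma^2_i(\nu_{\mathrm{mono}})$ and $\lambda^2 \mathbb{E}[\varepsilon_1^2] = \lambda^2$, yielding the first claim.

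For $\nu_{\mathrm{poly}} = (x_1 - x_2) + \lambda(\varepsilon_1 - \varepsilon_2)$ I would repeat the same expansion around $\mu_i(\nu_{\mathrm{poly}})$. Independence of $(\varepsilon_1,\varepsilon_2)$ from $(x,y)$ kills the cross term as before, and the only new ingredient is $\mathbb{E}[(\varepsilon_1 - \varepsilon_2)^2] = \mathrm{Var}(\varepsilon_1) + \mathrm{Var}(\varepsilon_2) = 2$, which follows from the zero mean and mutual independence of the $\varepsilon_i$. This gives $\sigma^2_i(\nu_{\mathrm{poly}}) + 2\lambda^2$.

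There is no genuine technical obstacle here: the result is the standard additivity of variance under independent sums, specialised to the two representations. The content worth highlighting is the qualitative conclusion — combined with Theorem \ref{thm::gaussian_mu}, the polysemantic representation absorbs \emph{twice} the noise variance of the monosemantic one while its inter-class distance shrinks by the same factor as the monosemantic one, so the linear-separability criterion $\tilde{J}(\nu_{\mathrm{poly}})$ degrades strictly faster than $\tilde{J}(\nu_{\mathrm{mono}})$ as $\lambda$ grows, mirroring the label-noise analysis in Theorem \ref{thm::Jnr}.
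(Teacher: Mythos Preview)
Your proposal is correct and follows essentially the same approach as the paper: expand the square, use independence of $\varepsilon$ from $(x,y)$ together with $\mathbb{E}\varepsilon_i=0$ to kill the cross terms, and evaluate $\mathbb{E}[(\varepsilon_1-\varepsilon_2)^2]=2$ for the polysemantic case. The only cosmetic difference is that you center around $\mu_i$ before expanding, whereas the paper writes $\tilde{\sigma}^2_i = \mathbb{E}[\tilde{\nu}^2\mid y=i] - \tilde{\mu}_i^2$ and expands $\mathbb{E}[\tilde{\nu}^2\mid y=i]$ directly; both routes are equivalent.
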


\begin{proof}[Proof of Theorem \ref{thm::gaussian_sig}]
    For $\nu_{\mathrm{mono}}$ and $i=0,1$,
    \begin{align}
        \tilde{\sigma}^2_i(\nu_{\mathrm{mono}})
        &= \mathbb{E}((x_1 + \lambda \varepsilon_1)^2 | y=i) - \tilde{\mu}_i(\nu_{\mathrm{mono}})
        \nonumber\\
        &= \mathbb{E}(x_1^2| y=i) + 2\lambda \mathbb{E}(x_1 \varepsilon_1| y=i) + \lambda^2 \mathbb{E}(\varepsilon_1^2 | y=i) - \tilde{\mu}_i(\nu_{\mathrm{mono}})
        \nonumber\\
        &= \mathbb{E}(x_1^2| y=i) - \tilde{\mu}_i(\nu_{\mathrm{mono}}) + 0 + \lambda^2 \mathbb{E}(\varepsilon_1^2 | y=i)
        \nonumber\\
        &= \sigma^2_i(\nu_{\mathrm{mono}}) + \lambda^2.
    \end{align}
    For $\nu_{\mathrm{poly}}$ and $i=0,1$,
    \begin{align}
        \tilde{\sigma}^2_i(\nu_{\mathrm{mono}})
        &= \mathbb{E}(((x_1 - x_2) + \lambda (\varepsilon_1 - \varepsilon_2))^2 | y=i) - \tilde{\mu}_i(\nu_{\mathrm{poly}})
        \nonumber\\
        &= \mathbb{E}((x_1 - x_2)^2 | y=i) + 2\lambda \mathbb{E}((x_1 - x_2)(\varepsilon_1 - \varepsilon_2) | y=i) + \lambda^2 \mathbb{E}((\varepsilon_1 - \varepsilon_2)^2 | y=i) - \tilde{\mu}_i(\nu_{\mathrm{poly}})
        \nonumber\\
        &= \mathbb{E}((x_1 - x_2)^2 | y=i) - \tilde{\mu}_i(\nu_{\mathrm{poly}}) + 2\lambda \mathbb{E}((x_1 - x_2)\varepsilon_1 | y=i) - 2\lambda \mathbb{E}((x_1 - x_2)\varepsilon_2 | y=i) 
        \nonumber\\
        &+ \lambda^2 \mathbb{E}(\varepsilon_1^2| y=i) - 2\lambda^2 \mathbb{E}(\varepsilon_1 \varepsilon_2 | y=i) + \lambda^2 \mathbb{E}(\varepsilon_2^2 | y=i)
        \nonumber\\
        &= \sigma^2_i(\nu_{\mathrm{poly}}) + 2\lambda^2.
    \end{align}
\end{proof}

\begin{theorem}[Influence of Gaussian noise on linear seprarability] We denote the linear separability criterion under noise as $\tilde{J}(\nu) = \Delta\tilde{\mu}(\nu)/(\tilde{\sigma_0}(\nu) \tilde{\sigma}_1(\nu))$. For noise rate $\lambda > 0$,
\begin{equation}
\frac{\tilde{J}(\nu_{\mathrm{poly}})}{{J}(\nu_{\mathrm{poly}})}  \leq \frac{\tilde{J}(\nu_{\mathrm{mono}})}{{J}(\nu_{\mathrm{mono}})}  \leq 1.
\end{equation}
Meanwhile, we obtain $\tilde{J}(\nu_{\mathrm{poly}}) \leq \tilde{J}(\nu_{\mathrm{mono}})$ when $\lambda \geq 0.55$.
\label{thm::JnrG}
\end{theorem}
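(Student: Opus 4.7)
}
The plan is to reduce everything to explicit rational functions of $\lambda^2$ using the two preceding theorems, and then settle the comparisons by a straightforward sign analysis that exploits the key identity $\sigma_1^2(\nu_{\mathrm{mono}}) = \sigma_1^2(\nu_{\mathrm{poly}})$ inherited from Theorem \ref{thm::superposition}. By Theorem \ref{thm::gaussian_mu}, the numerator $\Delta\tilde\mu(\nu)$ equals $\Delta\mu(\nu)$ for both representations. By Theorem \ref{thm::gaussian_sig},
\[
\tilde\sigma_i^2(\nu_{\mathrm{mono}}) = \sigma_i^2(\nu_{\mathrm{mono}}) + \lambda^2, \qquad \tilde\sigma_i^2(\nu_{\mathrm{poly}}) = \sigma_i^2(\nu_{\mathrm{poly}}) + 2\lambda^2.
\]
Hence $\tilde J(\nu)/J(\nu) = \sigma_0(\nu)\sigma_1(\nu)/(\tilde\sigma_0(\nu)\tilde\sigma_1(\nu))$, and the rightmost inequality $\tilde J(\nu_{\mathrm{mono}})/J(\nu_{\mathrm{mono}}) \le 1$ is immediate since $\lambda^2 \ge 0$ only enlarges the denominators.

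For the middle inequality, I would square both sides and cancel the common factor $\sigma_1^2 := \sigma_1^2(\nu_{\mathrm{mono}}) = \sigma_1^2(\nu_{\mathrm{poly}})$. Writing $a := \sigma_0^2(\nu_{\mathrm{mono}})$ and $b := \sigma_0^2(\nu_{\mathrm{poly}})$, the claim reduces to
\[
b(a+\lambda^2)(\sigma_1^2+\lambda^2) \le a(b+2\lambda^2)(\sigma_1^2+2\lambda^2).
\]
Expanding, the right-hand side minus the left-hand side equals $ab\lambda^2 + (2a-b)\sigma_1^2\lambda^2 + (4a-b)\lambda^4$. Using the numerical values $a \approx 0.0605$, $b \approx 0.0762$, $\sigma_1^2 \approx 0.0708$ taken from the proof of Theorem \ref{thm::superposition} (via Theorems \ref{thm::muwo} and \ref{thm::muw} at $S = 0.2$), both $2a - b$ and $4a - b$ are positive, so every term is non-negative and the inequality follows.

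For the second claim, I would directly compare $\tilde J(\nu_{\mathrm{poly}})$ and $\tilde J(\nu_{\mathrm{mono}})$ by squaring. The condition becomes
\[
\frac{(b+2\lambda^2)(\sigma_1^2+2\lambda^2)}{(a+\lambda^2)(\sigma_1^2+\lambda^2)} \ge \left(\frac{\Delta\mu(\nu_{\mathrm{poly}})}{\Delta\mu(\nu_{\mathrm{mono}})}\right)^2 \approx \left(\frac{0.748}{0.406}\right)^2 \approx 3.393.
\]
The left-hand side is increasing in $\lambda^2$ and tends to $4$ as $\lambda \to \infty$ (since leading coefficients $2\lambda^2 \cdot 2\lambda^2$ over $\lambda^2 \cdot \lambda^2$ give $4$), so a unique crossover exists. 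Substituting $\lambda = 0.55$ yields a left-hand side of $\approx 3.398$, which meets the bound, establishing the threshold.

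The main obstacle is simply bookkeeping: one has to carry through the exact rational expressions that come from $S = 0.2$ in Theorems \ref{thm::muwo} and \ref{thm::muw}, and then verify that the quartic in $\lambda^2$ flips sign at $\lambda = 0.55$. The monotonicity argument collapses that verification to a single numerical check, so the only delicate point is ensuring the rounding used in the substituted values of $a$, $b$, $\sigma_1^2$, and $\Delta\mu$ is tight enough that the inequality at $\lambda = 0.55$ is genuine rather than an artifact of truncation; this can be done cleanly by keeping closed-form fractions up to the final step.
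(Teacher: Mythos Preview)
Your proposal is correct and follows the same route as the paper: reduce via Theorems \ref{thm::gaussian_mu} and \ref{thm::gaussian_sig}, exploit $\sigma_1^2(\nu_{\mathrm{mono}}) = \sigma_1^2(\nu_{\mathrm{poly}})$ from Theorem \ref{thm::superposition}, and then plug in the numerical values at $S=0.2$. Your write-up is in fact more explicit than the paper's own proof (which stops at the ratio formula and says ``plugging Theorem \ref{thm::superposition}, we complete the proof''); the only step you leave implicit is the monotonicity of $(b+2\lambda^2)(\sigma_1^2+2\lambda^2)/[(a+\lambda^2)(\sigma_1^2+\lambda^2)]$ in $\lambda^2$, but that follows immediately from the log-derivative and the already-established inequality $2a>b$.
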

As shown in Theorem \ref{thm::JnrG}, with the increase of noise strength, the linear separability ($J(\nu)$) of both polysemantic and monosemantic features becomes worse. However, $J(\nu_{mono})$ decreases more slowly. As a result, when the noise strength is aggressive enough ($\lambda \geq 0.25$), the monosemantic feature exhibits better linear seperability than the polysemantic one. The theoretical results reveal that the linear separability of monosemantic features is more robust than polysemantic features, which leads to better performance in tasks under Input noise.

\begin{proof}[Proof of Theorem \ref{thm::JnrG}]
    By definition, we have
    \begin{align}
        \frac{\tilde{J}(\nu_{\mathrm{mono}})/J(\nu_{\mathrm{mono}})}{\tilde{J}(\nu_{\mathrm{poly}})/J(\nu_{\mathrm{poly}})}
        &= \frac{[\Delta \tilde{\mu}(\nu_{\mathrm{mono}})/(\tilde{\sigma}_0(\nu_{\mathrm{mono}}) \tilde{\sigma}_1(\nu_{\mathrm{mono}}))]/[\Delta \mu(\nu_{\mathrm{mono}})/(\sigma_0(\nu_{\mathrm{mono}}) \sigma_1(\nu_{\mathrm{mono}}))]}{[\Delta \tilde{\mu}(\nu_{\mathrm{poly}})/(\tilde{\sigma}_0(\nu_{\mathrm{poly}}) \tilde{\sigma}_1(\nu_{\mathrm{poly}}))]/[\Delta \mu(\nu_{\mathrm{poly}})/(\sigma_0(\nu_{\mathrm{poly}}) \sigma_1(\nu_{\mathrm{poly}}))]}.
    \end{align}
    By Theorems \ref{thm::gaussian_mu} and \ref{thm::gaussian_sig}, we have
    $\Delta\tilde{\mu}(\nu_{\mathrm{mono}}) = \Delta\mu(\nu_{\mathrm{mono}})$, $\Delta\tilde{\mu}(\nu_{\mathrm{poly}}) = \Delta\mu(\nu_{\mathrm{poly}})$, $\tilde{\sigma}^2_i(\nu_{\mathrm{mono}}) = \sigma^2_i(\nu_{\mathrm{mono}}) + \lambda^2$, and $\tilde{\sigma}^2_i(\nu_{\mathrm{poly}}) = \sigma^2_i(\nu_{\mathrm{poly}}) + 2\lambda^2$, $i=0,1$. By Theorem \ref{thm::superposition}, we have $\sigma_1(\nu_{\mathrm{mono}})=\sigma_1(\nu_{\mathrm{poly}})$. Then we have
    \begin{align}
        \frac{\tilde{J}(\nu_{\mathrm{mono}})/J(\nu_{\mathrm{mono}})}{\tilde{J}(\nu_{\mathrm{poly}})/J(\nu_{\mathrm{poly}})} 
        &= \frac{\tilde{\sigma}_0(\nu_{\mathrm{poly}}) \tilde{\sigma}_1(\nu_{\mathrm{poly}})\sigma_0(\nu_{\mathrm{mono}})}{\tilde{\sigma}_0(\nu_{\mathrm{mono}}) \tilde{\sigma}_1(\nu_{\mathrm{mono}})\sigma_0(\nu_{\mathrm{poly}})}
        \nonumber\\
        &= \frac{\sqrt{(\sigma^2_0(\nu_{\mathrm{poly}})+2\lambda^2)(\sigma^2_1(\nu_{\mathrm{poly}})+2\lambda^2)} \sigma_0(\nu_{\mathrm{mono}})}{\sqrt{(\sigma^2_0(\nu_{\mathrm{mono}})+\lambda^2)(\sigma^2_1(\nu_{\mathrm{mono}})+\lambda^2)}\sigma_0(\nu_{\mathrm{poly}})}.
    \end{align}
    Then plugging Theorem \ref{thm::superposition}, we complete the proof.
\end{proof}

\section{Additional Experiments}

\begin{figure}[H]
\centering
    \includegraphics[width=0.4\linewidth]{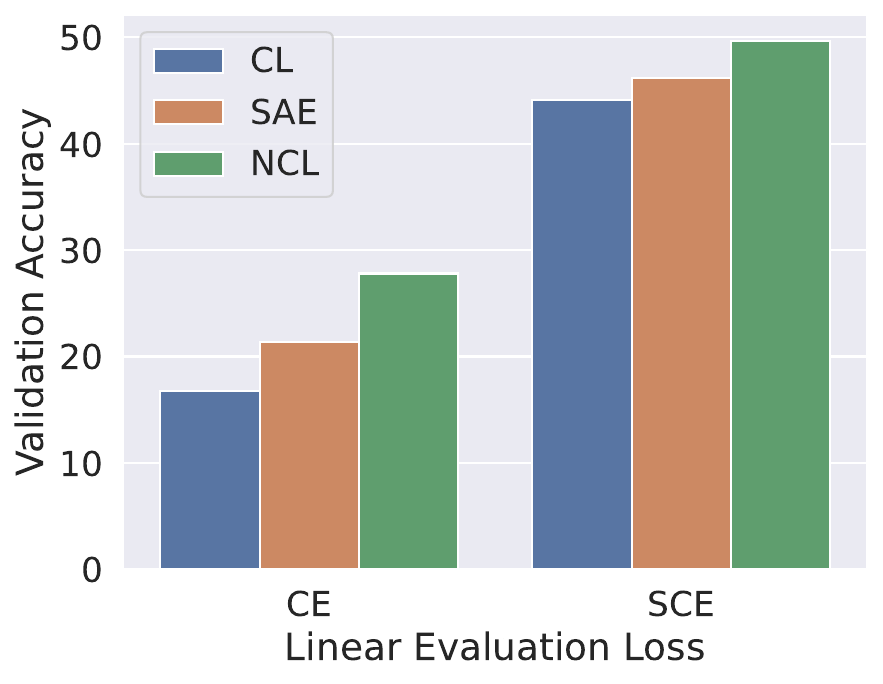}
    \caption{Linear probing performance with different evaluation losses on ImageNet-100 under 95\% noise rates.}
    \label{fig: combined with robust loss.}   
\end{figure}
\subsection{Combination with Robust Loss}
The previous results suggest that the monosemantic representations exhibit stronger robustness against label noise across various datasets. We note that there have been various studies to improve the robustness under label noise, such as applying robust loss functions \citep{van2015learning,ghosh2017robust}, correcting training labels \citep{reed2014training,ma2018dimensionality}, and reweighting training samples \citep{chen2019understanding,han2018co}. However, the perspective in this paper is orthogonal to them. Taking the representative robust loss function Symmetric Cross Entropy \citep{wang2019symmetric} as an example, we can obtain monosemantic representations as discussed above and then use the robust loss during the linear probing process. As shown in Figure~\ref{fig: combined with robust loss.}, both the robust loss and enhancing feature monosemanticity can improve the robustness against label noise. Furthermore, the two methods are orthogonal, and combining them can further improve performance.

\end{document}